\def\BibTeX{{\rm B\kern-.05em{\sc i\kern-.025em b}\kern-.08em
    T\kern-.1667em\lower.7ex\hbox{E}\kern-.125emX}}
\def\checkmark{\tikz\fill[scale=0.4](0,.35) -- (.25,0) -- (1,.7) -- (.25,.15) -- cycle;} 
\titlespacing{\section}{2pt}{2pt}{2pt}
\DeclareMathOperator*{\argmin}{arg\,min}
\newtheorem{theorem}{Theorem}
\newtheorem{lemma}{Lemma}
\newtheorem{remark}{Remark}
\newtheorem{assumption}{Assumption}
\begin{document}

\title{

Electrical Load Forecasting over Multihop Smart Metering Networks with Federated Learning 
}

 

\author{\IEEEauthorblockN{Ratun Rahman,  Pablo Moriano,~\IEEEmembership{Senior Member,~IEEE,}  Samee U. Khan,~\IEEEmembership{Senior Member,~IEEE,} and Dinh C. Nguyen, ~\IEEEmembership{Member,~IEEE,}}
\thanks{* Part of this work has been accepted at the IEEE Consumer Communications \& Networking Conference (CCNC), Jan. 2025 \cite{rahman2024electrical}.}
\thanks{Ratun Rahman and Dinh C. Nguyen are with the Department of Electrical and Computer Engineering, University of Alabama in Huntsville, Huntsville, AL 35899 (emails: \{rr0110,  dinh.nguyen\}@uah.edu). }
\thanks{Pablo Moriano is with the Computer Science and Mathematics Division, Oak Ridge National Laboratory, Oak Ridge, TN 37930 (e-mail: moriano@ornl.gov). }
\thanks{Samee U. Khan is with the Department of Electrical and Computer Engineering, Kansas State University, Manhattan, KS 66506 (email: sameekhan@ksu.edu). }

}
\maketitle
\pagenumbering{gobble} 



\begin{abstract}
Electric load forecasting is essential for power management and stability in smart grids. This is mainly achieved via advanced metering infrastructure, where smart meters (SMs) record household energy data. Traditional machine learning (ML) methods are often employed for load forecasting, but require data sharing, which raises data privacy concerns. Federated learning (FL) can address this issue by running distributed ML models at local SMs without data exchange. However, current FL-based approaches struggle to achieve efficient load forecasting due to imbalanced data distribution across heterogeneous SMs. This paper presents a novel personalized federated learning (PFL) method for high-quality load forecasting in metering networks. A meta-learning-based strategy is developed to address data heterogeneity at local SMs in the collaborative training of local load forecasting models. Moreover, to minimize the load forecasting delays in our PFL model, we study a new latency optimization problem based on optimal resource allocation at SMs. A theoretical convergence analysis is also conducted to provide insights into FL design for federated load forecasting. Extensive simulations from real-world datasets show that our method outperforms existing approaches regarding better load forecasting and reduced operational latency costs.


\end{abstract}

\maketitle

\begin{IEEEkeywords}
\textcolor{black}{Federated learning, load forecasting, multihop, smart grid, smart meter}
\end{IEEEkeywords}

\section{Introduction} \label{Sec:Introduction}

Electrical load forecasting is crucial for power management in smart grids. This service is mainly supported via advanced metering infrastructure, where smart meters (SMs) record household energy consumption and share this data with the server of the utility company \cite{9770488}. This enables utility providers to estimate future electricity demands and bolster grid reliability. Conventional load-forecasting techniques in machine learning (ML) and deep learning (DL) techniques utilize pattern-finding abilities to predict future outcomes. For example, long short-term memory (LSTM) has shown its potential for time-series data-based load forecasting applications \cite{hong2020deep, bouktif2018optimal}. \textcolor{black}{While smart grid technologies enhance efficiency and allow for real-time monitoring, they pose substantial security and privacy issues.  The constant data collection by smart meters (SMs) reveals precise residential energy usage habits that might be used for user profiling or monitoring.  This danger is amplified by centralized machine learning systems, which transport raw data to utility servers, allowing for eavesdropping, data breaches, and malicious inference attacks.  Furthermore, illegal access or manipulation of this data can result in inaccurate projections, financial fraud, and even operational disturbances in grid stability \cite{su2021secure, badr2023privacy}.  Thus, establishing safe and privacy-preserving learning frameworks is critical for practical implementation in smart grid systems.} In 2009, the compulsory roll-out of SMs in the Netherlands was halted following a court ruling that the metering data collection violated customers’ privacy rights \cite{cuijpers2013smart}. 


Recently, federated learning (FL) has been studied to address this data-sharing problem in load forecasting \cite{fekri2022distributed, taik2020electrical, gholizadeh2022federated}.  The local load forecasting model is trained at SMs using local metering data before sending it to the global server for the next global round. \textcolor{black}{This approach ensures user privacy by keeping raw data localized and preventing the transmission of sensitive information across networks.}  However, these literature works have struggled with addressing data heterogeneity, where they assume that every SM has a dataset with a similar data distribution. However, this is not realistic in real-world metering networks, where each SM typically owns a unique metering data distribution due to the nature of \textit{personalized} energy consumption patterns of households.  

To address this problem, we provide a novel load forecasting method for data heterogeneity in real-world metering networks. Our key idea is a new \textit{personalized federated learning (PFL)}-based load forecasting method. PFL handles data over-fitting by creating a customized load forecasting model for every SM. Our PFL technique is based on meta-learning, which helps local models to be trained properly by choosing the best parameters using the trial and run method \cite{9428530}. In this regard, each SMs participates in learning a custom load forecasting model, and they share local model parameters with the utility's server for model aggregation, aiming to build a global load forecasting model for the entire network with good generalization \cite{rahman2025multimodal, rahman2024improved}. 



Moreover, introducing FL into distributed load forecasting incurs latency costs due to model training at SMs and model communication between SMs and the utility's server. Minimizing the round-trip latency in such an FL-based load forecasting system is crucial to ensure timely load forecasting service of the entire metering network.  By addressing the latency issue, load forecasting efficiency and responsiveness can be enhanced for reliable smart grids. This will enhance the system's overall performance, allowing for more accurate and timely predictions, which are essential for effective energy management and distribution. \textit{This motivates us to jointly consider learning and latency optimization design for load forecasting to achieve optimal performance in terms of better accuracy and minimal delays of load forecasting.}




\subsection{Our Key Contributions} Motivated by the above limitations, \textit{we propose a novel load forecasting approach over metering networks in the anonymous grid.} 
Our key contributions are summarized as follows:
\begin{itemize}
\item We propose a new PFL approach called personalized meta-LSTM algorithm with a flexible SM participation method for collaborative load forecasting in the smart grid. This allows complicated and diverse data to be structured, assembled, and processed quickly, removing the need of data sharing to protect the privacy and security of household electricity recordings.  
\item We develop a personalized learning approach for local load forecasting in SMs based on meta-learning. Before training the local model, the clients are temporarily evaluated using varying learning rates. The most suitable learning rate is then selected among the available learning rates based on which one yields the lowest loss value. Next, we train local models with the optimal learning rate.
\item We propose a new latency optimization method to minimize the load forecasting delays caused by introducing PFL into the metering networks. The key objective is to find optimal resource allocation strategies for SMs, including transmit power and computational frequency, to optimize the round-trip PFL delay, achieved by an efficient convex optimization solution. 
\item We conduct extensive simulations on real-world datasets under independent and identically distributed (IID) and non-IID data settings, indicating that our approach outperforms existing works regarding better load forecasting and reduced operational latency costs. A theoretical convergence analysis is also conducted to provide insights into FL design for federated load forecasting. 
\end{itemize}

\subsection{Paper Organization}
The rest of the paper is structured as follows. In Section \ref{section: related work} we describe some related works and literature review and in Section \ref{Sec:SystemModel}, we present our system model, detailing the architecture and components of our proposed system. Section 
 \ref{Sec:PFLAlgorithmDesignforLoadForecasting} discusses the PFL algorithm design for load forecasting and convergence analysis in Section \ref{sec: convergence analysis}. Section 
 \ref{Sec:LatencyAnalysisForPFL-basedLoad Forecasting} presents the latency analysis of the PFL-based load forecasting system. We evaluate our simulation results and performance evaluations in Section 
 \ref{Sec:SimulationsandPerformanceEvaluation}. We present an in-depth outcomes analysis, comparing our proposed solutions to existing methods. Finally, Section  \ref{Sec:Conslusion} concludes the paper. 

\section{\textcolor{black}{Related Works}} \label{section: related work}
\textcolor{black}{
The two main categories of electric load forecasting techniques used in smart grids are statistical and artificial intelligence (AI)-based techniques. ARIMA \cite{tarmanini2023short}, exponential smoothing \cite{smyl2023drnn}, and regression-based forecasting \cite{madhukumar2022regression} are statistical approaches that depend on previous behavior and function well in stable situations.  However, they frequently fail to capture nonlinear load behavior and perform inadequately under changing conditions.}

\textcolor{black}{In contrast, AI-based approaches, such as LSTM, ANN, and SVM, can calculate and predict complicated and nonlinear electricity demand correlations.} The works in \cite{rafi2021short, bouktif2018optimal} used LSTM models for load estimation; however, such techniques need centralized data sharing, which may expose sensitive user information.  Furthermore, they impose significant computational and energy costs at the server \cite{jawad2018robust, ali2017ancillary}.  Recent research \cite{taik2020electrical, briggs2022federated} used federated learning (FL) to increase accuracy, maintain privacy, and reduce communication overhead.  FedAVG was introduced in another study in \cite{fekri2022distributed}, however, it continues to struggle with training on non-IID data due to heterogeneity in smart meter distributions. \textcolor{black}{FL has developed as a privacy-preserving smart metering system that allows local model training without exchanging raw consumption data.  This decentralized method reduces potential threats like data leakage, model inversion, and user profiling, which are typical in centralized load forecasting systems \cite{badr2023privacy, su2021secure}.  FL protects home privacy by keeping user data on-device while allowing for precise load forecasting in energy systems.}

Recently, PFL techniques have been considered to tackle the data heterogeneity issue in load forecasting. The study in \cite{9770488} proposed a PFL technique for load forecasting where each SM customizes a federated prediction model. Another work in \cite{10233242} introduced a Generative Adversarial Network (GAN) based differential privacy (DP) algorithm that included multi-task PFL. However, this solution increases computational complexity at the server in the load forecasting process. 


\begin{table}
\footnotesize
\centering
\caption{Comparison of our approach with existing load forecasting methods.}
\begin{tabular}{|p{2cm}||p{0.6cm}|p{0.6cm}|p{0.6cm}|p{0.6cm}|p{0.6cm}|p{0.6cm}|}
 \hline
 Objectives & \cite{rafi2021short, bouktif2018optimal} &  \cite{taik2020electrical, fekri2022distributed, briggs2022federated} & \cite{9770488} & \cite{10233242} & \cite{rahman2024electrical} & Our Approach\\
 \hline
 Handle uncertain and non-iid data & & & \checkmark & \checkmark & \checkmark&\checkmark\\
 \hline
 Include diverse SMs & \checkmark & & \checkmark & \checkmark & \checkmark & \checkmark\\
 \hline
 Adaptability to user change & \checkmark & \checkmark &  & \checkmark & \checkmark& \checkmark\\
 \hline
 Handle large dataset & & \checkmark & \checkmark & \checkmark & \checkmark& \checkmark\\
 \hline
 Maintain server complexity & & \checkmark & \checkmark & & \checkmark & \checkmark\\
 \hline
 Keeping data secured & & \checkmark & \checkmark & \checkmark & \checkmark & \checkmark\\
 \hline
 Latency minimization & & & & & & \checkmark\\
 \hline
 Practical multi-hop settings & & & & & & \checkmark\\
 \hline
 Convergence Analysis & & & & & &\checkmark\\
 \hline
\end{tabular}
\label{table:related_work_table}
\vspace{-5mm}
\end{table}

Moreover, several studies have concentrated on communication in smart grids and smart metering networks. The authors in \cite{kabalci2016survey} and \cite{barai2015smart} explained a smart grid, introduced its components, and presented the communication methods used, highlighting their advantages and shortcomings. It also surveyed smart grid integration, classified communication technologies, and outlined hardware and software security requirements. The work in \cite{kabalci2022design} proposed a smart metering infrastructure with DC and AC analog front ends and communication interfaces, and remote monitoring software for accurate and efficient measurement and transmission in microgrid and smart home applications. 
This work introduced a reconfigurable authenticated key exchange scheme using reconfigurable physical unclonable functions (PUFs) for secure and efficient smart grid communication. It offers advantages in computation and communication costs over current protocols.

Many studies have focused on improving the latency of  FL and addressed FL in multi-hop networks. For instance, \cite{mohasen2022federated} optimized model aggregation, routing, and spectrum allocation, while \cite{chen2022federated} introduced FedAir to mitigate communication impacts on FL performance. \cite{pinyoanuntapong2020fedair} used hierarchical FL with adaptive grouping, \cite{nguyen2022toward} aimed to reduce congestion by predicting future network topologies, and \cite{cash2023wip} examined jamming attacks on decentralized FL. Despite these efforts, latency minimization for  FL in multi-hop networks remains unaddressed. Single-hop networks often fail over large areas due to limited transmit power, whereas multi-hop networks provide better communication, coverage, and flexibility. Research on FL in multi-hop networks has focused on mesh networks. Still, it is crucial to consider scenarios with no direct links between non-consecutive nodes for worst-case analysis. \textcolor{black}{Our method is based on a joint design of a new PFL algorithm for collaborative load forecasting and a latency optimization solution for minimizing load forecasting delays in a multi-hop network setting}. We compare our approach with related works in Table~\ref{table:related_work_table}.

\section{System Model}
\label{Sec:SystemModel}

\subsection{Categories of Load Forecasting}
\textcolor{black}{
Electrical load forecasting is often divided into four groups based on the forecasting horizon: very short-term (VSTLF), short-term (STLF), medium-term (MTLF), and long-term (LTLF).  Each has a specific function and use different data sources and models, and they are explained in Table~\ref{tab: type_lf}.
}

\subsection{Overall System Architecture}
\begin{figure}[!t]
\centering
\includegraphics[width=3.2in]{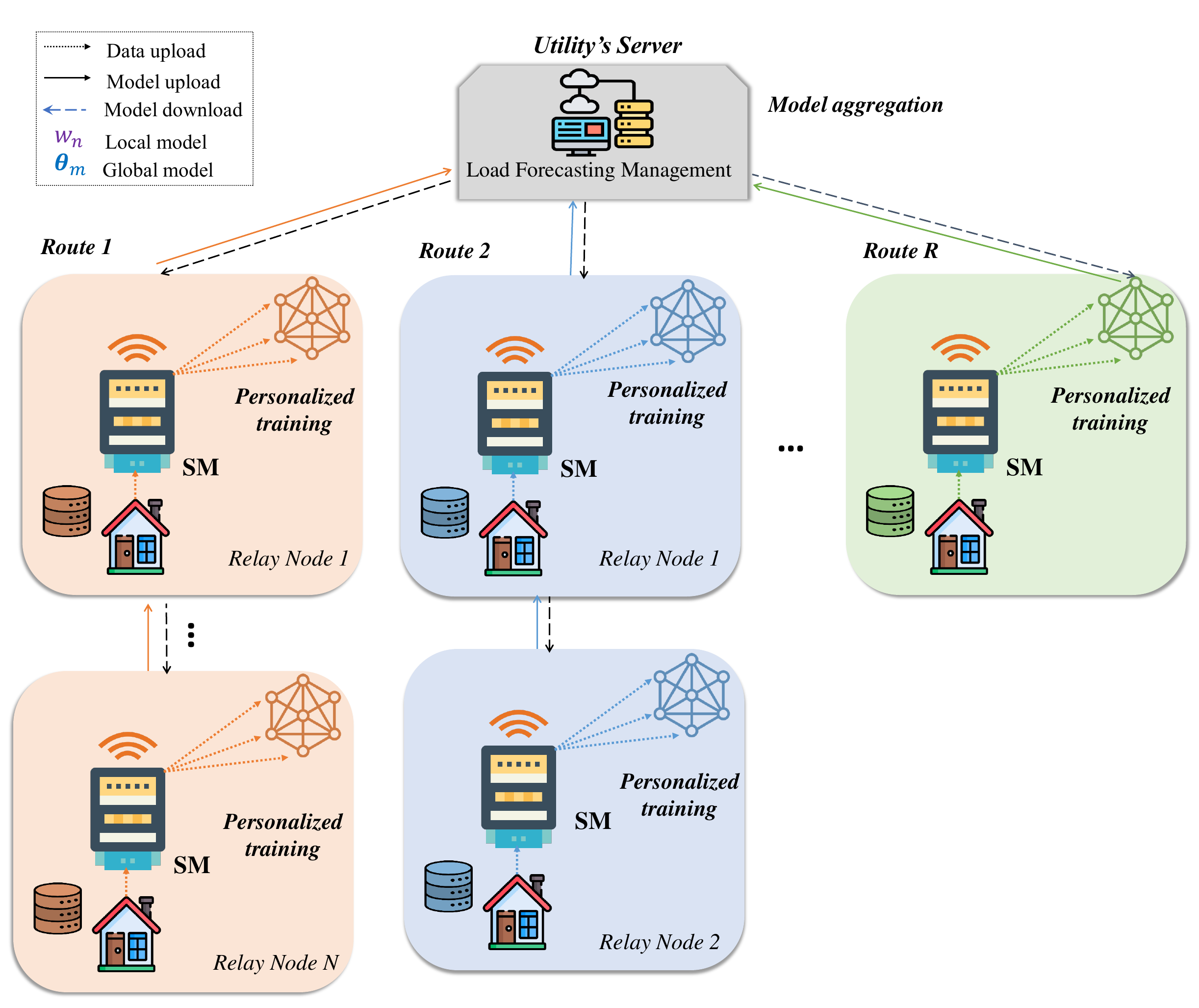}
\caption{Our proposed architecture for federated load forecasting in the multihop metering network. The SMs network is divided into different routes, each with a subset of SMs in a multi-hop topology. Each SM will train a custom load forecasting model and share the trained model with the utility's server for aggregation.}
\vspace{-1mm}
\label{Fig: Overview}
\vspace{-2mm}
\end{figure}

Fig.~\ref{Fig: Overview} illustrates the overall system for load forecasting over SMs. Inspired by the system model in \cite{miao2020evolutionary}, we consider a multi-hop metering network, where SMs are connected under a multi-hop topology in the wireless cellular network. Specifically, there are different routes and multiple SMs as relays on each route. Each SM trains a local load forecasting model and shares it with a utility server. The server is considered a global server where global model aggregation is performed based on a shared local model for load prediction. Each SM  denoted as $n \in \mathcal{N}$,  records household energy consumption data. Energy recordings are time-varying and differ over SMs. \textcolor{black}{We denote every global round as $k \in \mathcal{K}$ where $k = \{1, 2, 3, . . . , K\}$ and $K$ is the final global round. In each round $k$, each SM $n$ holds a local dataset $D_n^{(k)}$ that varies in every round and client, with size $|D_n^{(k)}|$. The SMs employ these datasets for global training in round $k$ and the total dataset is $D^{(k)} = \sum_{n\in \mathcal{N}} D_n^{(k)}$.}

\textcolor{black}{
To create a local model, each SM requires a gradient parameter denoted by $\nabla F$ with a learning rate $\alpha$ to fasten or slow the learning process. For our personalized approach, we have a series of learning rates ($\alpha_1,\alpha_2,\dots$), and we calculate the loss value for every learning rate for each SM $n$ in each round $k$ in a small dataset, as Dataloader denoted by $D^{temp}$. The learning rate that provides the lowest loss value is the optimal learning rate $(\alpha^{\text{best}}_{n,k})$. Then we use $\alpha^{\text{best}}_{n,k}$ for the local model training and send the local weight to the server for global model aggregation. Then the global model is updated for the next training round $k+1$ using federated averaging.}

\begin{table}
\color{black}
\centering
\footnotesize
\caption{Categories of Electric Load Forecasting}
\label{tab: type_lf}
\begin{tabular}{|p{0.75cm}|p{1.1cm}|p{5.8cm}|}
\hline
\textbf{Type} & \textbf{Horizon} & \textbf{Purpose} \\
\hline
VSTLF & Minutes to 1 hour & Ensures real-time grid stability and frequency regulation; supports automatic generation control. \\
\hline
STLF & 1 hour to 1 week & Aids in daily operations, unit commitment, and economic dispatch with weather and calendar data. \\
\hline
MTLF & 1 week to 1 year & Used for maintenance scheduling, fuel procurement, and mid-range market forecasting. \\
\hline
LTLF & Over 1 year & Supports infrastructure planning, investment decisions, and long-term energy policy development. \\
\hline
\end{tabular}
\end{table}

\subsection{Objective Function}
{\color{black}
Our proposed approach aims to achieve learning personalization for local load forecasting at SMs. In doing so, it is important to find an optimal learning rate for the local ML model, which is obtained by minimizing the  objective function:
\begin{equation} 
    F_{avg} = \frac{1}{K} \sum_{k=1}^{K} \mathcal{L}(y^k,\hat y^k),
\end{equation}
where $\mathcal{L}$ is a loss function. $y^k$ represents the actual value, and $\hat y^k$ is the predicted value for the $k^{th}$ task. In our work, we have used MAE and RMSE loss, the most widely used loss function for load forecasting.

\textbf{MAE Loss.} We train the model to minimize the mean absolute error (MAE), suitable for applications where all errors are equally important \cite{willmott2005advantages}, given by
\begin{equation}
    \mathcal{L}(y^k,\hat y^k) = \frac{1}{n} \sum_{i=1}^{N} |y_i - \hat{y}_i|.
\end{equation}

\textbf{RMSE Loss.} We calculate root mean squared error (RMSE), preferably when large deviations are particularly undesirable \cite{chai2014root}, using the equation
\begin{equation}
    \mathcal{L}(y^k,\hat y^k) = \sqrt{\frac{1}{N} \sum_{i=1}^N (y_i^k-\hat y_i^k)^2}.
\end{equation}}

\subsection{LSTM}
LSTM is usually used in time sequences and long-range dependencies datasets. To forecast future values based on past data, load prediction usually involves finding patterns and trends across time.  LSTM is a form of recurrent neural network (RNN) architecture consisting of unique units or memory cells designed to retain their state over time and regulate the processing and storage of information. \textcolor{black}{As a result, LSTM can handle long-term dependency problems better than RNN, which is crucial for load forecasting}. Each LSTM unit has three different gates that facilitate input, forget, and output gates, and two components: cell state and hidden state. 
At each training step $j$, the proposed LSTM model operates through the following key stages.
\textcolor{black}{
\begin{enumerate}
    \item The data is extracted from the cell state determined by the forget gate ($f_j$).
    \begin{equation}
        f_j = \sigma\!\left(W_f \begin{pmatrix} h_{j-1} \\ x_j \end{pmatrix} + b_f\right),
    \end{equation}
    where $\sigma$ is the sigmoid activation function, $W_f$ and $b_f$ are the weight matrix and bias for the forget gate $f_j$, $h_{j-1}$ is the hidden state from the previous time step, and $x_j$ is the input at the current time step.
    \item The input gate ($i_j$) determines additional data that must be added to the cell state $c_j$.
    \begin{equation}
        i_j = \sigma\!\left(W_i \begin{pmatrix} h_{j-1} \\ x_j \end{pmatrix} + b_i\right).
    \end{equation}
    \item A fresh candidate value to be added to the cell state is provided by the candidate cell state $\tilde C_j$.
    \begin{equation}
        \tilde{C}_j = \tanh\!\left(W_c \begin{pmatrix} h_{j-1} \\ x_j \end{pmatrix} + b_c\right),
    \end{equation}
    where $tanh$ is the hyperbolic tangent activation function and $W_c$ and $b_c$ are the weight matrix and bias for the candidate cell state $\tilde C_j$. 
    \item Cell state update combines the old cell state, forget gate output, input gate output, and candidate cell state to change the cell state.
    \begin{equation}
        C_j = f_j \odot C_{j-1} + i_j \odot \tilde{C}_j,
    \end{equation}
    where $C_{j-1}$ is the cell state from the previous time step. 
    \item From the current cell state, the output gate $o_j$ decides what information to output.
    \begin{equation}
        o_j = \sigma\!\left(W_o \begin{pmatrix} h_{j-1} \\ x_j \end{pmatrix} + b_o\right).
    \end{equation}
    \item The hidden state $h_j$ is updated for the current time step.
    \begin{equation}
        h_j = o_j \odot \tanh(C_j).
    \end{equation}
\end{enumerate}
    }

\section{PFL Algorithm Design for Load Forecasting} \label{Sec:PFLAlgorithmDesignforLoadForecasting}
Fig. \ref{Fig: Overview} depicts the multihop load forecasting framework using PFL, where a centralized server is connected to relay and leaf nodes. Each node goes through a local training process before sending its local model to its parent node. We can separate our system model into multiple steps as follows.

\textbf{Step 1:} 
We assume there are $N$ SMs and an initial parameter of the global model $\boldsymbol{w}_0$. A generalized FL with a single server for global round $k \in \mathcal{K}$ can be explained as

\begin{equation}
\min_{\boldsymbol{w} \in {\rm I\!R}^d} F_k(\boldsymbol{w_k}) := \frac{1}{N} \sum_{n=1}^{N} f_{n,k}(\boldsymbol{w}_{n,k}),
\end{equation}
where the function $f_i: \rm I\!R^d \longrightarrow \rm I\!R, n\in \mathcal{N}=\{1,2,3,\dots,N\}$ denotes the predicted loss value over $m^{th}$ SM's data distribution:
\begin{equation}\label{supervised_ML}
f_{i,k}(\boldsymbol{w}_n) := {\rm I\!E}_{\xi_i} \left [ f_{n,k}^{'} (\boldsymbol{w}_n,x_n) \right ].
\end{equation}
Here, ${f_{i,k}}^{'} (\boldsymbol{w}_n,x_n)$ is a loss function calculating the difference between data sample $x_n$ and its corresponding using $\boldsymbol{w}_n$ at round $k$.

\textbf{Step 2:} 
Assume that local training iteration index is denoted as $j \in \mathcal{J}$,  where $j = \{1, 2, 3, . . . , J\}$, the the local update at SM $n$ using LSTM is expressed as: 
\begin{equation} \label{eq: sgd}
\boldsymbol{w}_{n,k}^{j+1} = \boldsymbol{w}_{n,k}^{j} - \alpha_k \nabla F(\boldsymbol{w}_{n,k}^j,\chi_{n,k}^j),
\end{equation}
where $\alpha >0$ is the local learning rate, and $\chi$ is the non-IID sample from the local dataset. However, the learning rate $\alpha_k$ is not constant in our approach. We use optimal and personalized learning rate (denoted as $\alpha^{\text{best}}_{n,k}$) instead for SM $n$ and round $k$. 

\textbf{Step 3:}
To calculate $\alpha^{\text{best}}_{n,k}$, in each round $k$ for every SM $n$, we apply a group of available learning rates given as $\alpha_j$ on Dataloader $(D_{n,k}^{temp})$ where $j$ is the total number of available learning rates to calculate the loss value. The loss value at $i \in j$ is calculated as:
\begin{equation}
    f_{i,k}^{'} (\alpha_i) = f_{i,k}((\boldsymbol{w}_{n,k},\alpha_i), D_{n,k}^{temp}).
\end{equation}
Then we select the $\alpha_i$ as $\alpha^{\text{best}}_{n,k}$ that produce the minimum $f_i^{'}$ value. So, we can explain that as:
\begin{equation}
    \alpha^{\text{best}}_{n,k} := \argmin_{i \in j} (f_{i,k}^{'}(\alpha_i)).
\end{equation}

\textbf{Step 4:}
We calculate the local model training in each SM $n$ (a leaf or relay SM) using equation \ref{eq: sgd}. So after receiving the parameter of the global model $\boldsymbol{w}_{n,k}$, every $n$ updates its personalized model by using the optimized learning rate $\alpha^{\text{best}}_{n,k}$. So, we can express equation \ref{eq: sgd} as:
\begin{equation}
\boldsymbol{w}_{n,k}^{j+1} = \boldsymbol{w}_{n,k}^{j} - \alpha^{\text{best}}_{n,k} \nabla F(\boldsymbol{w}_{n,k}^j,\chi_{n,k}^j).
\end{equation} 

\textbf{Step 5:}
After $J$ rounds, each $n$ then sends its local model's weight $(\boldsymbol{w}_{n,k} = \boldsymbol{w}_{n,k}^J)$ to its parent node. If the node is not a leaf node, the current node relays its local and child models to the parent node. The weights eventually arrive at the server when the nodes are aggregated.

\textbf{Step 6:}
Once the server collects all the nodes' weights, it calculates the federated averaging for the next global round $k+1$ as:
\begin{equation}
    \boldsymbol{w}_{k+1} = \frac{1}{N} \sum_{n\in\mathcal{N}} \boldsymbol{w}_{n,k}. 
\end{equation}

\textbf{Step 7:}
We calculate the loss value for the updated weight on testloader $(D^{test})$ and then broadcast the updated weight to all the SMs. We do this loop for $K$ times and get the optimal global load forecasting model $w^*$. 

\begin{algorithm}
\footnotesize
	\caption{{Proposed PFL algorithm for high-quality load forecasting across SMs}}
	\begin{algorithmic}[1]
		\label{algo: metaSGD}
		\STATE \textbf{Input:}  The set of global communication rounds $\mathcal{K}$, local training round $\mathcal{J}$, a set of SMs $\mathcal{N}$
		\STATE \textbf{Initialization:} Initialize global model $\boldsymbol{w}_0$, different learning rates $\alpha_{0,1,\dots,j}$
		\FOR{each global communication round $k \in \mathcal{K}$}
		\STATE Send $\boldsymbol{w}_k$ to sampled SMs
		\FOR{each sampled SM $n \in \mathcal{N}$ in parallel}
		\FOR{each local training epoch $j \in \mathcal{J}$}
        \STATE Get $\boldsymbol{w}_k$ 
        \FOR{each learning rates $\alpha_i$ where $i \in j$}
        \STATE Calculate $f_{i,k}^{'} (\alpha_i) = f_{i,k}((\boldsymbol{w}_{n,k}^j,\alpha_i), D_{n,k}^{temp})$ on $D_{n,k}^{temp})$
        \STATE Save the best learning rate as $(\alpha^{\text{best}}_{n,k})$ that has the lowest $f_{i,k}^{'}$
        \STATE Return $\alpha^{\text{best}}_{n,k}$ to the local model $\theta_n$
        \ENDFOR
        \STATE Perform local model training (meta-learning) on $\theta_i$, $\boldsymbol{w}_{n,k}^{j+1} = \boldsymbol{w}_{n,k}^{j} - \alpha^{\text{best}}_{n,k} \nabla F(\boldsymbol{w}_{n,k}^j,D_{n,k}^{train})$
		\ENDFOR
        \STATE Send $\boldsymbol{w}_{n,k}$ to the server
		\ENDFOR
		\STATE The utility's server updates the global parameter by averaging: $\boldsymbol{w}_{k+1} = \frac{1}{N} \sum_{n\in\mathcal{N}}\boldsymbol{w}_{n,k}$ 
        \STATE Perform test on the updated weight $\boldsymbol{w}_{k+1}$
		\STATE The utility's server  broadcasts the aggregated global model $\boldsymbol{w}_{k+1}$ to all participating SMs for the next round of training
		\ENDFOR
        \STATE \textbf{Output:} Optimal global load forecasting model $\boldsymbol{w}^*$
  \end{algorithmic}
\end{algorithm}
The proposed PFL approach is summarized in Algorithm~\ref{algo: metaSGD}. For each global round $k$, after SM obtains the initial global weight from the utility server (line 7), they perform meta-learning functionalities in lines 8-12. The loss value is calculated for every available learning rate $\alpha_j$ (line 9). The $\alpha_j$ that produces the lowest loss value is then returned to the local model as the optimal learning rate in line 11. Then in line 13, we perform the local model training and send the updated local model to the server (line 15) after $T$ rounds of local rounds. Then the server does federated averaging on line 17, testing on line 18, and then saves the updated weight for the next global round. Finally, after $K$ global rounds, we get our optimal global model $\boldsymbol{w}^*$. 

\textcolor{black}{This PFL technique uses client-specific learning rates, allowing each SM to train its local model based on its unique data properties and learning capabilities.  SMs with high-quality or representative datasets—and those that perform better—can use higher learning rates to contribute more significantly to the global model.  On the other hand, SMs with less useful or noisy input employ lower learning rates, limiting their effect and preventing global performance degradation.  This adaptive and personalized updating technique guarantees that the global model receives more trustworthy information while accommodating all clients.  As a result, the aggregated model receives more accurate and balanced updates, which improves overall convergence and prediction accuracy while maintaining adaptation.}
\textcolor{black}{Furthermore, by speeding convergence on high-performing SMs and avoiding superfluous updates from weak nodes, our PFL technique minimizes the number of global rounds required to achieve 95\% final accuracy.  In our studies, normal FL took 80 rounds, but our technique achieved the same accuracy in 45 rounds, resulting in a 43.75\% decrease in global communication rounds.  This results in shorter training times and faster load forecasting, which is especially useful for latency-sensitive smart grid applications.}

\subsection{Limitations}
\textcolor{black}{Despite its potential, the proposed meta-learning-based PFL approach has some limitations.  It has significant computing costs due to the frequent local updates and many gradient steps necessary for client-specific adaptation.  This method not only increases energy consumption on resource-constrained SMs, but it also raises the danger of overfitting when local datasets are limited or non-representative, lowering generalization capabilities.  Furthermore, real-world implementation presents practical problems such as varying network capacity, high latency, and intermittent connectivity, which could prevent timely and effective model synchronization among clients.  SMs' hardware restrictions, such as limited memory and compute capability, make it much more difficult to train and update complicated models.  These characteristics influence the system's stability, responsiveness, and scalability in real-world scenarios.}

\textcolor{black}{While FL protects data privacy by storing them locally, sending models may still leak critical information via model upgrades \cite{badr2023privacy}. This is particularly concerning in non-IID environments when client-specific patterns might be identified.  Our present solution does not use formal privacy-preserving procedures like differential privacy or secure aggregation, which might improve protection while introducing additional complexity. FL is vulnerable to adversarial, backdoor, and model poisoning attacks.}
\section{Convergence Analysis} \label{sec: convergence analysis}
In our proposed framework, SMs exchange ML models, where the global model aggregation is executed. From this observation, we focus on analyzing the convergence properties of the federated model training. To support our convergence analysis, we introduce a virtual variable as $\bar{\boldsymbol{w}}_k^j = \frac{1}{N} \sum_{n\in\mathcal{N}} \boldsymbol{w}_{n,k}^j $, where $k \in \mathcal{K}$ denotes the global round. Accordingly, we also define $g_k^j = \frac{1}{N} \sum_{n\in\mathcal{N}} \nabla F_n(\boldsymbol{w}_{n,k}^j,\chi_{n,k}^j)$, and $h_k^j = \frac{1}{N} \sum_{n\in\mathcal{N}} \nabla F_n({x}_{n,k}^j,\zeta_{n,k}^j)$. It is easy to observe that $\bar{\boldsymbol{w}}_k^{j+1}  = \bar{\boldsymbol{w}}_k^j - \alpha_kg_k^j + \bar{\textbf{v}}_k^j$, and $\mathbb{E}g_k = \bar{g}_k$, $\mathbb{E}h_k = \bar{h}_k$, where $\mathbb{E}$ represents function's expectation. Before analyzing the convergence, we make the following common assumptions:
 \begin{assumption}
 Each local loss function $F_n$ ($n\in \mathcal{N}$) is $L$-smooth, i.e., $F_n(\boldsymbol{w}') - F_n(\boldsymbol{w}) \leq \langle \boldsymbol{w}'- \boldsymbol{w}, \nabla F_n(\boldsymbol{w}) \rangle + \frac{L}{2} ||\boldsymbol{w}'- \boldsymbol{w}||, \forall \boldsymbol{w}', \boldsymbol{w}$.
 \end{assumption}
 \begin{assumption}
 Each local loss function $F_n$ ($n\in \mathcal{N}$) is $\mu$-strongly convex, i.e., $F_n(\boldsymbol{w}') - F_n(\boldsymbol{w}) \ge \langle \boldsymbol{w}'- \boldsymbol{w}, \nabla F_n(\boldsymbol{w}) \rangle + \frac{\mu}{2} ||\boldsymbol{w}'- \boldsymbol{w}||, \forall \boldsymbol{w}', \boldsymbol{w}$.
 \end{assumption}
 \begin{assumption} \label{Assump:Variance-gradient}
 The variance of stochastic gradients on local model training at each SM is bounded: $\mathbb{E}||\nabla F_n(\boldsymbol{w}_{n,k}^j,\chi_{n,k}^j) - \nabla F_n(\boldsymbol{w}_{n,k}^j)||^2 \leq \sigma_r^2$. 
 \end{assumption}
We next introduce several lemmas employed in the main result of Theorem~\ref{theorem_covergence}.
\begin{lemma}
Let Assumption \ref{Assump:Variance-gradient} hold, the expected upper bound of the variance of the stochastic gradient on local model training is given as $\mathbb{E} ||g_k^j- \bar{g}_k^j||^2 \leq \frac{\sigma_r^2}{N^2}$. 
\end{lemma}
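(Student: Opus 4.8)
The plan is to exploit the definition $g_k^j = \frac{1}{N}\sum_{n\in\mathcal{N}} \nabla F_n(\boldsymbol{w}_{n,k}^j,\chi_{n,k}^j)$ together with the fact that $\bar{g}_k^j := \mathbb{E}[g_k^j]$, so that $g_k^j - \bar{g}_k^j$ is precisely the average of the $N$ per-SM stochastic-gradient deviations. Introducing $e_n := \nabla F_n(\boldsymbol{w}_{n,k}^j,\chi_{n,k}^j) - \nabla F_n(\boldsymbol{w}_{n,k}^j)$, I would first argue that each $e_n$ is zero-mean, since the stochastic gradient is an unbiased estimator of the true local gradient; this gives $\bar{g}_k^j = \frac{1}{N}\sum_{n} \nabla F_n(\boldsymbol{w}_{n,k}^j)$ and hence $g_k^j - \bar{g}_k^j = \frac{1}{N}\sum_{n} e_n$.

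Second, I would expand the squared norm and separate diagonal from cross terms:
\begin{equation}
\mathbb{E}\|g_k^j - \bar{g}_k^j\|^2 = \frac{1}{N^2}\,\mathbb{E}\Big\|\sum_{n\in\mathcal{N}} e_n\Big\|^2 = \frac{1}{N^2}\Big(\sum_{n} \mathbb{E}\|e_n\|^2 + \sum_{n\neq m}\mathbb{E}\langle e_n, e_m\rangle\Big).
\end{equation}
The key structural step is that the local sampling at distinct SMs is statistically independent, so the cross terms vanish, $\mathbb{E}\langle e_n, e_m\rangle = \langle \mathbb{E} e_n, \mathbb{E} e_m\rangle = 0$ for $n\neq m$, collapsing the right-hand side to $\frac{1}{N^2}\sum_{n} \mathbb{E}\|e_n\|^2$. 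Third, I would invoke Assumption~\ref{Assump:Variance-gradient}, which supplies $\mathbb{E}\|e_n\|^2 \le \sigma_r^2$ for each $n$, and collect the terms.

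The step I expect to be the main obstacle is the final bookkeeping of the factor of $N$: summing $N$ per-SM variance terms, each at most $\sigma_r^2$, and dividing by $N^2$ yields $\frac{N\sigma_r^2}{N^2}=\frac{\sigma_r^2}{N}$, whereas the stated bound is $\frac{\sigma_r^2}{N^2}$. Reconciling the two therefore requires either a sharper hypothesis on the \emph{aggregate} deviation (i.e.\ $\mathbb{E}\|\sum_{n} e_n\|^2 \le \sigma_r^2$ rather than a per-client bound) or a tighter normalization convention than the one I have used. I would flag this factor-of-$N$ discrepancy as the point to verify against the intended statement, since the remainder of the argument (unbiasedness, independence-driven cancellation of cross terms, and direct application of Assumption~\ref{Assump:Variance-gradient}) is routine.
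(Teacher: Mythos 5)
Your argument follows exactly the same route as the paper's proof: write $g_k^j - \bar g_k^j$ as the average of the per-SM deviations $e_n := \nabla F_n(\boldsymbol{w}_{n,k}^j,\chi_{n,k}^j) - \nabla F_n(\boldsymbol{w}_{n,k}^j)$, eliminate the cross terms via unbiasedness and independence across SMs, and apply Assumption~\ref{Assump:Variance-gradient} termwise. The factor-of-$N$ discrepancy you flag is genuine, and it is the paper---not you---that drops it: the paper's proof arrives at precisely the same quantity $\frac{1}{N^2}\sum_{n\in\mathcal{N}}\mathbb{E}\|e_n\|^2$ and then simply writes $\leq \frac{\sigma_r^2}{N^2}$, which silently bounds a sum of $N$ terms, each at most $\sigma_r^2$, by a single $\sigma_r^2$. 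Under the per-SM assumption as stated, the correct conclusion of this argument is $\mathbb{E}\|g_k^j-\bar g_k^j\|^2 \leq \frac{\sigma_r^2}{N}$, exactly as you compute. To recover the stated bound $\frac{\sigma_r^2}{N^2}$ one would need the aggregate hypothesis you describe, namely $\mathbb{E}\bigl\|\sum_{n\in\mathcal{N}} e_n\bigr\|^2 \leq \sigma_r^2$, or equivalently a redefinition of $\sigma_r^2$ as a bound on the total rather than the per-client variance. The consequence is only a change of constants downstream: in Theorem~\ref{theorem_covergence} the term $\frac{\sigma_r^2}{N^2}$ would become $\frac{\sigma_r^2}{N}$, and the qualitative conclusion---that the stochastic-gradient noise contribution shrinks as more SMs participate---survives intact, just at a slower $1/N$ rate rather than $1/N^2$.
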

\begin{proof}
See Appendix \ref{Lemma_SGD:upperbound}. \renewcommand{\qedsymbol}{}
\end{proof}
\begin{lemma}
The expected upper bound of the divergence of $\boldsymbol{w}_{n,k}^j$ is given as 
\begin{equation} 
\begin{aligned}
& \left[ \frac{1}{N}\sum_{n\in\mathcal{N}}\mathbb{E} \Big\Vert\bar{\boldsymbol{w}}_k^j-\boldsymbol{w}_{n,k}^j\Big\Vert^2  \right] \leq  4\alpha_kJ B^2,
\end{aligned}
\end{equation}
for some positive $B$.
\end{lemma}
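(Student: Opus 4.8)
The plan is to exploit the synchronization that occurs at the top of each global round: immediately after the server broadcast in Step~7, every SM initializes its local model to the common global weight $\boldsymbol{w}_k$, so that $\boldsymbol{w}_{n,k}^0 = \bar{\boldsymbol{w}}_k^0$ and the divergence vanishes at $j=0$. The divergence at a later local iteration $j$ therefore arises purely from the accumulated difference between each client's stochastic-gradient trajectory and the network-averaged trajectory, which I would make explicit by unrolling the local SGD recursion in~\eqref{eq: sgd}.

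First I would telescope the local update over the steps $t=0,\dots,j-1$ to write
\begin{equation*}
\bar{\boldsymbol{w}}_k^j - \boldsymbol{w}_{n,k}^j = \alpha_k \sum_{t=0}^{j-1}\left(\nabla F_n(\boldsymbol{w}_{n,k}^t,\chi_{n,k}^t) - g_k^t\right),
\end{equation*}
where $g_k^t$ is the averaged stochastic gradient defined in the convergence setup. Taking the squared norm, applying expectation, and invoking Cauchy--Schwarz in the form $\|\sum_{t=0}^{j-1} a_t\|^2 \le j\sum_{t=0}^{j-1}\|a_t\|^2$ converts the single whole-round bound into a sum of per-step bounds and introduces one factor of $J$ (using $j\le J$). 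Next I would control each per-step term $\frac{1}{N}\sum_n \mathbb{E}\|\nabla F_n(\boldsymbol{w}_{n,k}^t,\chi_{n,k}^t) - g_k^t\|^2$ with the elementary inequality $\|a-b\|^2 \le 2\|a\|^2 + 2\|b\|^2$ together with a uniform second-moment bound $\mathbb{E}\|\nabla F_n(\cdot)\|^2 \le B^2$ on the stochastic gradients; this is where the constant $B$ and the factor of $4$ enter. Averaging over $n$ and collecting constants then yields the claimed $4\alpha_k J B^2$.

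The main obstacle I anticipate is the bookkeeping of the intra-round index. The statement is implicitly a per-round bound that hinges on the reset $\boldsymbol{w}_{n,k}^0=\boldsymbol{w}_k$, so I must be careful to restart the telescoped sum at the start of each global round rather than letting drift accumulate across rounds. A secondary subtlety is justifying the uniform bound $\mathbb{E}\|\nabla F_n\|^2 \le B^2$: Assumption~\ref{Assump:Variance-gradient} only controls the variance of the stochastic gradient about its mean, so closing the gap to a second-moment bound requires either an auxiliary bounded-gradient condition or using the $\mu$-strong convexity assumption to bound $\|\nabla F_n\|$ via the optimality gap. Finally, matching the exact powers of $\alpha_k$ and $J$ in the stated constant (rather than the $\alpha_k^2 J^2$ that the naive chain produces) will require absorbing the residual step-size and iteration factors into the constant $B$ or working in the regime $\alpha_k J \le 1$; I would flag this as the point where the constant is tuned to the form asserted in the lemma.
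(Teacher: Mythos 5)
Your proposal is correct and takes essentially the same route as the paper's proof: both start from the intra-round synchronization point where all local models coincide with the average, telescope the local SGD recursion, apply the inequality $\bigl\Vert\sum_{t} a_t\bigr\Vert^2 \le J\sum_{t}\Vert a_t\Vert^2$ to extract the factor $J$, and invoke a uniform bound $B^2$ on the stochastic gradient norms (the paper drops the averaged-trajectory term via $\mathbb{E}\Vert X-\mathbb{E}X\Vert^2 \le \mathbb{E}\Vert X\Vert^2$ and gets its factor $4$ from a decreasing-stepsize condition, whereas you keep the difference and get it from $\Vert a-b\Vert^2 \le 2\Vert a\Vert^2+2\Vert b\Vert^2$, a cosmetic distinction). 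Notably, the two caveats you flag are equally unresolved in the paper's own proof, which simply asserts $\Vert\nabla F_n(\boldsymbol{w}_{n,k}^j,\chi_{n,k}^j)\Vert^2 \le B^2$ mid-proof (it does not follow from Assumption 3) and whose algebra likewise yields factors of order $\alpha_k^2 J^2 B^2$ that are silently collapsed into the stated $4\alpha_k J B^2$.
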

\begin{proof}
See Appendix \ref{Lemma_lemma2:upperbound}. \renewcommand{\qedsymbol}{}
\end{proof}
\begin{lemma}
The expected upper bound of $\mathbb{E} \left[||\bar{\boldsymbol{w}}_k^{j+1} - \boldsymbol{w}^*||^2 \right]$ is given as
\begin{equation} 
\footnotesize
\begin{aligned}
&\mathbb{E}||\bar{\boldsymbol{w}}_k^{j+1} - \boldsymbol{w}^*||^2  \leq 2(1-\mu\alpha_k)\mathbb{E}||\bar{\boldsymbol{w}}_k^j - \boldsymbol{w}^*||^2  
\\&+ \left(2+\frac{1}{2\alpha_k}\right) \left[ \frac{1}{N}\sum_{n\in\mathcal{N}} \mathbb{E} \Big\Vert\bar{\boldsymbol{w}}_k^j-\boldsymbol{w}_{n,k}^j\Big\Vert^2  \right]  +2\alpha_k^2\mathbb{E}||g_k^j - \bar{g}_k^j||^2.
\end{aligned}
\end{equation}
\end{lemma}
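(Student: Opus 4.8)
The plan is to run the familiar one-step virtual-sequence argument, isolating the stochastic-gradient noise before invoking the convexity and smoothness assumptions. First I would start from the aggregated recursion $\bar{\boldsymbol{w}}_k^{j+1} = \bar{\boldsymbol{w}}_k^j - \alpha_k g_k^j$ given in the setup, subtract $\boldsymbol{w}^*$, and decompose the stochastic gradient into its mean and a fluctuation, $g_k^j = \bar{g}_k^j + (g_k^j - \bar{g}_k^j)$, so that
\begin{equation}
\bar{\boldsymbol{w}}_k^{j+1} - \boldsymbol{w}^* = \big(\bar{\boldsymbol{w}}_k^j - \boldsymbol{w}^* - \alpha_k \bar{g}_k^j\big) - \alpha_k\big(g_k^j - \bar{g}_k^j\big).
\end{equation}
Applying the elementary inequality $\|a+b\|^2 \le 2\|a\|^2 + 2\|b\|^2$ and taking expectations splits the bound into a deterministic ``mean-descent'' term $2\,\mathbb{E}\|\bar{\boldsymbol{w}}_k^j - \boldsymbol{w}^* - \alpha_k \bar{g}_k^j\|^2$ and a noise term that is already exactly $2\alpha_k^2\,\mathbb{E}\|g_k^j - \bar{g}_k^j\|^2$, matching the last term of the claim.

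It then remains to show $\mathbb{E}\|\bar{\boldsymbol{w}}_k^j - \boldsymbol{w}^* - \alpha_k \bar{g}_k^j\|^2 \le (1-\mu\alpha_k)\,\mathbb{E}\|\bar{\boldsymbol{w}}_k^j - \boldsymbol{w}^*\|^2 + \big(1+\tfrac{1}{4\alpha_k}\big)\tfrac{1}{N}\sum_{n}\mathbb{E}\|\bar{\boldsymbol{w}}_k^j - \boldsymbol{w}_{n,k}^j\|^2$, which after multiplying by the factor $2$ produces the remaining two terms. I would expand
\begin{equation}
\|\bar{\boldsymbol{w}}_k^j - \boldsymbol{w}^* - \alpha_k \bar{g}_k^j\|^2 = \|\bar{\boldsymbol{w}}_k^j - \boldsymbol{w}^*\|^2 - 2\alpha_k\langle \bar{\boldsymbol{w}}_k^j - \boldsymbol{w}^*, \bar{g}_k^j\rangle + \alpha_k^2\|\bar{g}_k^j\|^2,
\end{equation}
and recall that $\bar{g}_k^j = \tfrac{1}{N}\sum_{n\in\mathcal{N}} \nabla F_n(\boldsymbol{w}_{n,k}^j)$ is an average of gradients taken at the \emph{divergent} local iterates. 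To handle the cross term I would write $\bar{\boldsymbol{w}}_k^j - \boldsymbol{w}^* = (\bar{\boldsymbol{w}}_k^j - \boldsymbol{w}_{n,k}^j) + (\boldsymbol{w}_{n,k}^j - \boldsymbol{w}^*)$ inside each summand: the $(\boldsymbol{w}_{n,k}^j - \boldsymbol{w}^*)$ piece is lower-bounded by $\mu$-strong convexity (Assumption~2), supplying the contraction $1-\mu\alpha_k$, while the $(\bar{\boldsymbol{w}}_k^j - \boldsymbol{w}_{n,k}^j)$ piece is absorbed by Young's inequality, which generates the $\tfrac{1}{4\alpha_k}$ coefficient on the model-divergence term. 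The residual $\alpha_k^2\|\bar{g}_k^j\|^2$ is controlled through $L$-smoothness (Assumption~1), again folding into the divergence and distance terms.

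I expect the delicate part to be the cross (inner-product) term rather than any single inequality: because $\bar{g}_k^j$ aggregates gradients evaluated at the heterogeneous local models $\boldsymbol{w}_{n,k}^j$, the strong-convexity descent estimate cannot be applied at $\bar{\boldsymbol{w}}_k^j$ directly, and the triangle split must be balanced so that the Young's-inequality constant and the smoothness bound together reproduce \emph{exactly} the coefficient $\big(2+\tfrac{1}{2\alpha_k}\big)$ while preserving the clean contraction factor $2(1-\mu\alpha_k)$. Getting this constant bookkeeping to close — and verifying that the strong-convexity inequality converts the per-client distances $\|\boldsymbol{w}_{n,k}^j - \boldsymbol{w}^*\|^2$ back into the virtual-average distance $\|\bar{\boldsymbol{w}}_k^j - \boldsymbol{w}^*\|^2$ up to the divergence term — is the main obstacle; once it is set, the remaining steps are routine applications of Assumptions~1--3.
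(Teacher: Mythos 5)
Your proposal is correct, and on the hard part it follows the same path as the paper; the genuine difference is the very first step. You separate the noise via $\|a+b\|^2\le 2\|a\|^2+2\|b\|^2$, which requires no probabilistic argument and immediately yields the factors $2(1-\mu\alpha_k)$, $\big(2+\tfrac{1}{2\alpha_k}\big)$ and $2\alpha_k^2$ exactly as in the lemma statement. The paper instead adds and subtracts $\alpha_k\bar g_k^j$, expands the square exactly, and kills the cross term $2\alpha_k\langle \bar{\boldsymbol{w}}_k^j-\boldsymbol{w}^*-\alpha_k\bar g_k^j,\,\bar g_k^j-g_k^j\rangle$ using the (conditional) unbiasedness $\mathbb{E}g_k^j=\bar g_k^j$; this produces the tighter coefficients $(1-\mu\alpha_k)$, $\big(1+\tfrac{1}{\alpha_k}\big)$ (appearing as $1+\tfrac{1}{4\alpha_k}$ one display earlier) and $\alpha_k^2$, which is what the appendix actually ends with --- so, curiously, your route reproduces the stated constants of the lemma more faithfully than the paper's own proof does, at the price of a uniform factor of $2$. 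From that point on, your treatment of the mean-descent term $\|\bar{\boldsymbol{w}}_k^j-\boldsymbol{w}^*-\alpha_k\bar g_k^j\|^2$ --- splitting $\bar{\boldsymbol{w}}_k^j-\boldsymbol{w}^*=(\bar{\boldsymbol{w}}_k^j-\boldsymbol{w}_{n,k}^j)+(\boldsymbol{w}_{n,k}^j-\boldsymbol{w}^*)$ inside the inner product, strong convexity on the second piece, Young's inequality on the first, $L$-smoothness to absorb $\alpha_k^2\|\bar g_k^j\|^2$, and the conversion of $\tfrac1N\sum_n\|\boldsymbol{w}_{n,k}^j-\boldsymbol{w}^*\|^2$ back into $\|\bar{\boldsymbol{w}}_k^j-\boldsymbol{w}^*\|^2$ --- is precisely the paper's bounding of its terms $(B)$ and $(C)$. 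One caveat you should make explicit: both routes close only under the implicit stepsize condition $\alpha_k\le\tfrac{1}{4L}$, which makes the function-value terms $2\alpha_k(2\alpha_k L-1)\tfrac1N\sum_n\big(F_n(\boldsymbol{w}_{n,k}^j)-F^*\big)$ generated by smoothness non-positive (half of this budget is then re-expanded, which is where the $\tfrac{1}{4\alpha_k}$ divergence coefficient really comes from, rather than from a single Young step as your sketch suggests); this hypothesis is assumed in the paper's proof but stated nowhere in the lemma, and your phrase that smoothness ``folds into'' the remaining terms silently relies on it as well.
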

\begin{proof}
See  Appendix \ref{Lemma_lemma3:SGDupdate}. \renewcommand{\qedsymbol}{}
\end{proof}
\begin{theorem} \label{theorem_covergence}
Let Assumptions 1-3 hold, then the upper bound of the convergence rate of the federated model training at each cluster after $K$ global rounds satisfies
\begin{equation} \label{equa:final_convergenceIID0}
\footnotesize
\begin{aligned}
&\mathbb{E}\left[F_n(\boldsymbol{w}_K)\right] -F^* \leq \frac{L(1+L/\mu)}{\mu}\frac{1}{(K+L/\mu)} (F_n(\boldsymbol{w}_1) -F^*)
 \\&+\frac{16L}{30\mu^2(K+L/\mu)}\sum_{k=1}^{K} \left[4JB^2 \left(\frac{\alpha_k+1}{\alpha_k^2} \right) +\frac{\sigma_r^2}{N^2} \right],
\end{aligned}
\end{equation}
where $J$ is the number of local SGD rounds at each SM, $N$ is the number of SMs, $\alpha_k$ is the learning rate of each SM in global round $k$, and $L, B, \mu$ are constants. 
\end{theorem}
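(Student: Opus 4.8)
The plan is to obtain the theorem by telescoping the one-step recursion already isolated in Lemma~3, after every auxiliary quantity on its right-hand side has been replaced by the explicit bounds from Lemmas~1 and~2. Writing $\Delta_k^j := \mathbb{E}\|\bar{\boldsymbol{w}}_k^j - \boldsymbol{w}^*\|^2$, I would first substitute the client-drift bound $\frac{1}{N}\sum_{n}\mathbb{E}\|\bar{\boldsymbol{w}}_k^j-\boldsymbol{w}_{n,k}^j\|^2 \le 4\alpha_k J B^2$ (Lemma~2) and the gradient-variance bound $\mathbb{E}\|g_k^j-\bar{g}_k^j\|^2 \le \sigma_r^2/N^2$ (Lemma~1) into the Lemma~3 inequality. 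This collapses it to the scalar recursion $\Delta_k^{j+1} \le 2(1-\mu\alpha_k)\Delta_k^j + \bigl(2+\tfrac{1}{2\alpha_k}\bigr)4\alpha_k J B^2 + 2\alpha_k^2\tfrac{\sigma_r^2}{N^2}$, in which the inhomogeneous part is fully explicit in $L,\mu,B,\sigma_r,J,N$ and the step size $\alpha_k$.

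Next I would unroll this recursion on two scales. Since the personalized rate $\alpha^{\text{best}}_{n,k}$ is held fixed across the $J$ local iterations of a round, the within-round recursion is geometric in the factor $2(1-\mu\alpha_k)$; summing the inhomogeneous terms over $j=1,\dots,J$ relates the end-of-round iterate $\Delta_k^{J}$ to the start-of-round iterate $\Delta_k^{0}=\mathbb{E}\|\boldsymbol{w}_k-\boldsymbol{w}^*\|^2$, which is exactly the aggregated global model. Chaining across $k=1,\dots,K$ then gives a single bound on $\Delta_K$ that accumulates the per-round error $4JB^2\bigl(\tfrac{\alpha_k+1}{\alpha_k^2}\bigr)+\tfrac{\sigma_r^2}{N^2}$, matching the bracketed summand in the theorem. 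To extract the $\mathcal{O}\!\bigl(1/(K+L/\mu)\bigr)$ rate I would adopt a diminishing schedule of the form $\alpha_k = \beta/(k+L/\mu)$ and prove by induction on $k$ that $\Delta_k \le v/(k+L/\mu)$ for a constant $v$ determined by $L,\mu,B,\sigma_r,N$; the inductive step reduces to a harmonic-type comparison between consecutive terms of the sequence.

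Finally I would convert the iterate bound into the function-value gap. By $L$-smoothness (Assumption~1) and $\nabla F(\boldsymbol{w}^*)=0$, we have $F_n(\boldsymbol{w}_K)-F^* \le \tfrac{L}{2}\|\boldsymbol{w}_K-\boldsymbol{w}^*\|^2$, so taking expectations and applying $\Delta_K \le v/(K+L/\mu)$ yields the two claimed terms: the initial-gap term $\tfrac{L(1+L/\mu)}{\mu}\tfrac{1}{K+L/\mu}\bigl(F_n(\boldsymbol{w}_1)-F^*\bigr)$ and the accumulated noise term carrying the prefactor $\tfrac{16L}{30\mu^2(K+L/\mu)}$ and the sum over $k$.

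I expect the principal obstacle to be the step-size bookkeeping inside the induction, made delicate here by the contraction factor $2(1-\mu\alpha_k)$: the leading constant of two means a fixed step will not contract, so $\beta$ (and the offset $L/\mu$) must be chosen so the effective contraction holds while remaining compatible with the smoothness constraint $\alpha_k \le 1/L$, forcing the harmonic comparison to close with the precise constants rather than merely up to order. The accompanying piece of careful computation is verifying that the summed inhomogeneous term $\bigl(2+\tfrac{1}{2\alpha_k}\bigr)4\alpha_k J B^2$ reorganizes into the displayed $4JB^2\bigl(\tfrac{\alpha_k+1}{\alpha_k^2}\bigr)$ form, and that the $\tfrac{16L}{30\mu^2}$ and $\tfrac{L(1+L/\mu)}{\mu}$ factors emerge exactly from the telescoping and the final smoothness step.
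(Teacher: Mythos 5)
Your plan follows the same overall architecture as the paper's proof: substitute Lemmas 1 and 2 into the one-step bound of Lemma 3 to obtain a scalar recursion, impose a diminishing step size $\alpha_k = \Theta\!\left(1/(k+L/\mu)\right)$, close an induction of the form $\mathbb{E}\|\bar{\boldsymbol{w}}_k-\boldsymbol{w}^*\|^2 \le m_k/(k+\omega)$, and finish with $L$-smoothness (the paper takes $\eta_k = \tfrac{4\theta}{k+\omega}$ with $\theta=4/\mu$, $\omega=L/\mu$, which is exactly where the constant $\tfrac{16}{15\mu^2}$, written as $\tfrac{16}{30\mu^2}$ after the factor $\tfrac{L}{2}$, comes from). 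However, there is a genuine gap at precisely the point you flag and then wave away: no choice of $\beta$ can rescue the induction if you keep the leading coefficient $2$ from the \emph{statement} of Lemma 3. Admissibility of the step size (the smoothness-based steps inside Lemma 3's derivation require $\alpha_k \le \tfrac{1}{4L}$, and $L \ge \mu$) forces $\mu\alpha_k \le \tfrac14$, hence $2(1-\mu\alpha_k) \ge \tfrac32$ for every $k$; a recursion whose multiplicative factor is bounded below by $\tfrac32$ amplifies rather than contracts, and the comparison $\tfrac32\cdot\tfrac{v}{k+\omega} \le \tfrac{v}{k+1+\omega}$ needed in your inductive step is equivalent to $3(k+1+\omega)\le 2(k+\omega)$, which is impossible. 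The situation is worse under your two-scale unrolling, since Lemma 3 is a per-local-iteration bound: the factor $2(1-\mu\alpha_k)$ compounds to at least $(3/2)^J$ within a single round.

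The resolution --- and the route the paper actually takes, without saying so --- is that the factor $2$ in Lemma 3's statement is loose: the paper's own proof of Lemma 3 ends with the tighter inequality $\mathbb{E}\|\bar{\boldsymbol{w}}_k^{j+1}-\boldsymbol{w}^*\|^2 \le (1-\mu\alpha_k)\,\mathbb{E}\|\bar{\boldsymbol{w}}_k^j-\boldsymbol{w}^*\|^2 + \left(1+\tfrac{1}{\alpha_k}\right)\cdot[\text{drift term}] + \alpha_k^2\,\mathbb{E}\|g_k^j-\bar{g}_k^j\|^2$, with all leading constants equal to $1$, and the proof of the theorem starts from that form, not from the stated one. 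With the true contraction factor $1-\mu\alpha_k<1$, the induction via $m_k=\max\left\{\tfrac{\theta^2\Phi_k}{4\theta\mu-1},\,(\omega+1)Y_{k-1}\right\}$ closes and yields the stated constants. So your argument must first re-derive (or correct) the one-step bound; as written, starting from Lemma 3's statement, the induction fails. One further small repair: to make the initial term appear as $F_n(\boldsymbol{w}_1)-F^*$ rather than $\|\boldsymbol{w}_1-\boldsymbol{w}^*\|^2$, you also need $\mu$-strong convexity in the form $\|\boldsymbol{w}-\boldsymbol{w}^*\|^2 \le \tfrac{2}{\mu}\left(F_n(\boldsymbol{w})-F^*\right)$; the paper applies this round by round, running the last stage of the recursion in function-value space rather than converting once at round $K$ as you propose.
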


\begin{proof}
See  Appendix \ref{Proof_globalbound}. \renewcommand{\qedsymbol}{}
\end{proof}
\vspace{-2mm}
\begin{remark}
Theorem~\ref{theorem_covergence} implies an inverse relation between the overall FL convergence loss rate and global rounds $K$ and the number of SMs $N$ under a certain number of local SGD rounds $J$. That is, longer training rounds $K$ with more SMs $N$ involved in the training will decrease the upper bound's first and second terms, resulting in improved global model performance.
\end{remark}

\section{Latency Analysis For PFL-based Load Forecasting} \label{Sec:LatencyAnalysisForPFL-basedLoad Forecasting}
This section explicitly analyzes the latency in our PFL-based load forecasting system. 

\subsection{ Formulation of Latency Problem}

As shown in Fig.~\ref{Fig: Overview}, the network consists of leaf and relay nodes. We assume $R$ routes, each starting from a leaf node and passing through $N$ relay nodes to the utility server. Thus, the number of leaf nodes equals the number of routes. The set of leaf nodes is $\mathcal{R}={1, 2, \dots, R}$ and relay nodes per route are $\mathcal{M}={1, 2, \dots, M}$. All nodes forward data as follows. Leaf nodes train local models and upload to relay nodes; relay nodes train and relay received models. Each node uses its dataset, ensuring data privacy.

In the case of leaf node $r$, let $f_{r}$, $D_{r}$, and $C_{r}$ represent its CPU computation capability (in CPU cycles per second), the number of data samples, and the number of CPU cycles needed to process a data sample, respectively. If $L_{r}$ is the number of local iterations, the computation time $T_{r}^{\text{train}}$ for $L_{r}$ iterations is calculated as $T_{r}^{\text{train}}=\frac{L_{r}C_{r}D_{r}}{f_{r}}$. The corresponding energy consumption, $E_{r}^{\text{train}}$, is given by $E_{r}^{\text{train}}=L_{r}\zeta_{r}C_{r}D_{r}{f_{r}^{2}}$, where $\zeta_{r}$ is the effective switched capacitance that depends on the hardware and chip architecture of leaf node $r$. Upon completion of local computation, each SM uploads its local model to the parent. We consider frequency division multiple access for the up-link operation. The achievable rate $R_{r}$ of leaf node $r$ is calculated as $R_{r}=b_{r}\log_{2}\left(1+\frac{p_{r}g_{r}}{b_{r}n_{0}}\right)$, where $b_{r}$ represents the allocated bandwidth, $p_{r}$ is the transmit power, $g_{r}$ stands for the channel gain of leaf node $r$, and $n_{0}$ denotes the noise power spectral density. Assuming a constant data size $s$ for the local models, the uploading time can be expressed as $T_{r}^{\text{up}}=\frac{s_r}{R_{r}}$, and the corresponding energy consumption is $E_{r}^{\text{up}}=T_{r}^{\text{up}}p_{r}$ that differs on every $r$. Hence, the total time $T_{r}$ required for computing and uploading local models for leaf node $r$ is $T_{r}=T_{r}^{\text{train}}+T_{r}^{\text{up}}$. If the total energy consumed by leaf node $r$ for computing and uploading local models during each global iteration is denoted by $E_{r}$, it can be expressed as $E_{r}=E_{r}^{\text{train}}+E_{r}^{\text{up}}$.

In case of relay nodes, the computation time for $L_{m}$ local iterations is denoted by $T_{m}^{\text{train}}$, where $T_{m}^{\text{train}}=\frac{L_{m}C_{m}D_{m}}{f_{m}}$. Here, $f_{m}$ represents the CPU computation capability (in CPU cycles per second), $D_{m}$ stands for the number of data samples, and $C_{m}$ denotes the number of CPU cycles needed to process a data sample. The corresponding energy consumption by relay node $m$ is given by $E_{m}^{\text{train}}=L_{m}\zeta_{m}C_{m}D_{m}{f_{m}^{2}}$, where $\zeta_{m}$ is the effective switched capacitance of relay node $m$ that depends on the hardware and chip architecture of it. Like leaf nodes, relay nodes upload their local models to the server for aggregation after local computation. The uploading time for relay node $m$ is given by $T_{m}^{\text{up}}=\frac{s}{R_{m}}$, where $s$ represents the constant data size of the local models uploaded by all nodes. The achievable uploading rate $R_{m}$ is determined by $R_{m}=b_{m}\log_{2}\left(1+\frac{p_{m}g_{m}}{b_{m}n_{0}}\right)$, where $b_{m}$ stands for the allocated bandwidth, $p_{m}$ denotes the transmit power, and $g_{m}$ represents the channel gain of relay node $m$. The corresponding energy consumption is expressed as $E_{m}^{\text{up}}=T_{m}^{\text{up}}p_{m}$. In this work, all channels are assumed to have two fading effects that characterize mobile communications: large-scale and small-scale fading. The small-scale fading component is modeled using a Rayleigh distribution, while a deterministic path loss model represents the large-scale fading coefficient.

Furthermore, a relay node must transmit the local models of all preceding nodes. In a given route, relay node $m$ is connected to $m$ successor nodes: $(m-1)$ relay nodes and one leaf node. Let $T_{m}^{\text{tx}} = \sum_{k=1}^{m-1} T_{m,k}^{\text{tx}}$ be the time required to transmit the models of the $(m-1)$ preceding relay nodes. If $T_{m,r}^{\text{tx}} = \frac{s}{R_{m}}$ is the time to transmit the model of the preceding leaf node, the total transmission energy is $E_{m}^{\text{tx}} = E_{m}^{\text{up}} + (m-1)E_{m}^{\text{up}} = mE_{m}^{\text{up}}$, assuming equal model sizes. Energy for uploading a model of size $s$ depends on the node’s upload rate and transmit power. Thus, relay node $m$ consumes equal energy per preceding model. The total time for relay node $m$ per global iteration is $T_{m,r} = T_{m}^{\text{train}} + T_{m}^{\text{up}} + T_{m,r}^{\text{tx}} + T_{m}^{\text{tx}}$, and the total energy is $E_{m} = E_{m}^{\text{train}} + E_{m}^{\text{up}} + E_{m}^{\text{tx}}$.

If $T_{\text{total}}^{r}$ is the total time required for route $r$ (leaf node $r$ and relay node $1$ to $M$) to complete each global iteration, then it can be formulated as $T_{\text{total}}^r=(T_{r}+\sum_{m=1}^{M} T_{m,r})$. As the route that takes the longest time to complete each global iteration will be the bottleneck for the latency, the total time required for completing each global communication round denoted as $k$ ($k = [1,2,..., K]$) can be written as $T_{\text{total}} = \underset{r \in \mathcal{R}}{\max} T_{\text{total}}^r  = \underset{r \in \mathcal{R}}{\max} (T_{r}+\sum_{m=1}^{M} T_{m,r})$. Hence, the total latency of the FL system over $K$ global rounds can be expressed as
\begin{equation}
T_{\text{total}}^{\text{FL}} = \sum_{k=1}^{K}\left(\underset{r \in \mathcal{R}}{\max} T_{\text{total}}^r \right)  = \sum_{k=1}^{K}\left(\underset{r \in \mathcal{R}}{\max} (T_{r}+\sum_{m=1}^{M} T_{m,r})\right).
\end{equation}

This research aims to minimize the latency of the PFL-based load forecasting system. Based on the above analysis, we formulate the following optimization problem
\begin{subequations} 
\begin{align}
\min_{\pmb{p_{r}}, \pmb{f_{r}}, \pmb{p_{m}}, \pmb{f_{m}}} \quad & T_{\text{total}}^{\text{FL}} \label{eqn:2a}\\ 
\textrm{s.t.} \quad & 0 \le p_{r} \le P_{r}, \forall{r} \label{eqn:2b}\\
& 0 \le p_{m} \le P_{m}, \forall{m} \label{eqn:2c}\\
& 0 \le f_{r} \le F_{r}, \forall{r} \label{eqn:2d}\\
& 0 \le f_{m} \le F_{m}, \forall{m} \label{eqn:2e}\\
& E_{r} \le E_{r}^{\max}, \forall{r} \label{eqn:2f}\\
& E_{m} \le E_{m}^{\max}, \forall{m} \label{eqn:2g}
\end{align}
\end{subequations}
\noindent
where $\pmb{p_{r}}=\{p_{1}, p_{2}, \dots ,p_{R}\}$, $\pmb{p_{m}}=\{p_{1}, p_{2}, \dots ,p_{M}\}$, $\pmb{f_{r}}=\{f_{1}, f_{2}, \dots ,f_{R}\}$, and $\pmb{f_{m}}=\{f_{1}, f_{2}, \dots ,f_{M}\}$. In (18), \eqref{eqn:2b} and \eqref{eqn:2c} represent the feasible range of the transmit power due to the power budgets of the leaf nodes and the relay nodes. The CPU frequency of each node is constrained in \eqref{eqn:2d} and \eqref{eqn:2e}. The other two constraints, \eqref{eqn:2f} and \eqref{eqn:2g}, are on the energy consumption by each leaf node and relay node, respectively.

\subsection{Proposed Solution to Optimize Latency for the PFL-based Load Forecasting System}

Solving the problem in (18) is challenging due to the coupling of multiple optimization variables. The objective function \eqref{eqn:2a} as well as the energy constraints \eqref{eqn:2f} and \eqref{eqn:2g} are non-convex because of the $\log_{2}$ function of the achievable rates. As mentioned earlier, we divide the problem in (18) into two sub-problems to address the non-convex nature of the objective function and the constraints. Hence, the control variables of the problem in (18) are divided into two blocks: (i) the first block for leaf node optimization $(p_{r}, f_{r})$ and (ii) the second block for relay node optimization $(p_{m}, f_{m})$, which will be updated alternatively in an iterative fashion.

\textit{\textbf{For the first block}},  we introduce a new slack variable $x_{r}$ such that:

\begin{equation}
x_{r} \geq \frac{s}{b_{r}\log_{2}\left(1+\frac{p_{r}g_{r}}{b_{r}n_{0}}\right)}, \forall{r}. 
\end{equation} 
\textbf{Problem in (18)} can be equivalently re-written as
\begin{subequations} 
\begin{align}
\min_{p_{r},f_{r}} \quad & \sum_{k=1}^{K}\left[\max_{r \in \mathcal{R}} \left({\frac{L_{r}C_{r}D_{r}}{f_{r}}}+x_{r}\right.\right. \nonumber\\
& \left.\left.+\sum_{m=1}^{M}\left({\frac{L_{m}C_{m}D_{m}}{f_{m}}}+\frac{(m+1)s}{b_{m}\log_{2}\left(1+\frac{p_{m}g_{m}}{b_{m}n_{0}}\right)}\right)\right)\right] \label{eqn:4a}\\ 
\textrm{s.t.} \quad & L_{r}\zeta_{r}C_{r}D_{r}{f_{r}^{2}}+x_{r}p_{r} \le E_{r}^{\max}, \forall{r} \label{eqn:4b}\\
& \frac{s}{b_{r}x_{r}} \leq \log_{2}\left(1+\frac{p_{r} g_{r}}{b_{r} n_{0}}\right), \forall{r} \label{eqn:4c}\\
& \eqref{eqn:2b}, \eqref{eqn:2d}. \label{eqn:4d}
\end{align}
\end{subequations}

The objective \eqref{eqn:4a} is convex, while the constraint in \eqref{eqn:4d} is also convex. Hence, we now convexify the constraints \eqref{eqn:4b} and \eqref{eqn:4c}.

\textit{Constraint \eqref{eqn:4b}}: For $x_{r}>0$ and $p_{r}>0$, we apply \textcolor{black}{successive convex approximation (SCA)} to approximate $x_{r}p_{r}$ as

\begin{equation} \label{eqn:5}
x_{r} p_{r} \le \frac{1}{2} \frac{p_{r}^{i}}{x_{r}^{i}} x_{r}^{2} + \frac{1}{2} \frac{x_{r}^{i}}{p_{r}^{i}} p_{r}^{2} = h_{r}^{i}(x_{r},p_{r})
\end{equation}

\noindent
where $p_{r}^{i}$ and $x_{r}^{i}$ are the feasible point of $p_{r}$ and $x_{r}$ at iteration $i$. Hence, constraint \eqref{eqn:4b} can be convexified as

\begin{equation} \label{eqn:6}
    L_{r}\zeta_{r}C_{r}D_{r}{f_{r}^{2}} +\frac{1}{2} \frac{p_{r}^{i}}{x_{r}^{i}} x_{r}^{2} + \frac{1}{2} \frac{x_{r}^{i}}{p_{r}^{i}} p_{r}^{2} \le E_{r}^{\max}, \forall{r}.
\end{equation}

\textit{Constraint \eqref{eqn:4c}}: We use this inequality \cite{kabalci2022design}
\begin{align} \label{eqn:34}
\ln(1+z) \ge \ln(1+z_{i}) + \frac{z_{i}}{z_{i}+1} - \frac{(z_{i})^{2}}{z_{i}+1} \frac{1}{z}.
\end{align}
Now we approximate \textcolor{black}{right hand side (RHS)} of \eqref{eqn:4c} as

\begin{equation} \label{eqn:8}
\begin{split}
    \frac{ s \ln 2}{b_{r} x_{r}} \leq \ln \left(1+\frac{p_{r}^{i} g_{r}}{b_{r} n_{0}}\right) + \frac{p_{r}^{i} g_{r}}{p_{r} g_{r}+b_{r} n_{0}} \\
    - \frac{(p_{r}^{i} g_{r})^{2}}{p_{r}^{i} g_{r} + b_{r} n_{0}} \frac{1}{p_{r} g_{r}}, \forall{r}.
\end{split}
\end{equation}

So, we solve the following convex problem in iteration $i+1$:
\begin{subequations} 
\begin{align}
\min_{p_{r},f_{r}} \quad & \sum_{k=1}^{K}\left[\max_{r \in \mathcal{R}} \left({\frac{L_{r}C_{r}D_{r}}{f_{r}}}+x_{r}\right.\right. \nonumber\\
& \left.\left.+\sum_{n=1}^{N}\left({\frac{L_{n}C_{n}D_{n}}{f_{n}}}+\frac{(n+1)s}{b_{n}\log_{2}\left(1+\frac{p_{n}g_{n}}{b_{n}n_{0}}\right)}\right)\right)\right] \label{eqn:9a}\\ 
\textrm{s.t.} \quad & \eqref{eqn:4d}, \eqref{eqn:6}, \eqref{eqn:8}. \label{eqn:9b}
\end{align}
\end{subequations}
\textit{For complexity analysis}, this problem consists of $(2R)$ scalar decision variables and $(4R)$ linear or quadratic constraints, which results in the per-iteration computational complexity of $\mathcal{O}\left((2R)^2\sqrt{4R}\right)$ \cite{ben2001lectures}.

\textit{\textbf{For the second block}}, we introduce a new slack variable $y_{m}$ such that:
\begin{equation}
y_{m} \ge \frac{(m+1) s}{b_{m}\log_{2}\left(1+\frac{p_{m}g_{m}}{b_{m}n_{0}}\right)}, \forall{m}. 
\end{equation}
\textbf{Problem in (18)} can be equivalently re-written as

\begin{subequations} 
\begin{align}
\min_{p_{m},f_{m}} \quad & \sum_{k=1}^{K}\left[\max_{r \in \mathcal{R}} \left(\left({\frac{L_{r}C_{r}D_{r}}{f_{r}}}+\frac{s}{b_{r}\log_{2}\left(1+\frac{p_{r}g_{r}}{b_{r}n_{0}}\right)}\right)\right.\right. \nonumber\\
& \left.\left.+\sum_{m=1}^{M}\left({\frac{L_{m}C_{m}D_{m}}{f_{m}}}+y_{m}\right)\right)\right] \label{eqn:11a}\\ 
\textrm{s.t.} \quad & L_{m}\zeta_{m}C_{m}D_{m}{f_{m}^{2}}+y_{m}p_{m} \le E_{m}^{\max}, \forall{m} \label{eqn:11b}\\
& \frac{(m+1)s}{b_{m}y_{m}} \leq \log_{2}\left(1+\frac{p_{m} g_{m}}{b_{m} n_{0}}\right), \forall{m}. \label{eqn:11c}\\
& \eqref{eqn:2c}, \eqref{eqn:2e}. \label{eqn:11d} 
\end{align}
\end{subequations}

\noindent Although the objective function \eqref{eqn:11a} and constraint in \eqref{eqn:11d} are convex, constraints \eqref{eqn:11b} and \eqref{eqn:11c} are still non-convex. To convexify these two constraints, we follow the same strategy for constraints \eqref{eqn:4b} and \eqref{eqn:4c}.

\textit{Constraint \eqref{eqn:11b}}: Similar to constraint \eqref{eqn:4b}, constraint \eqref{eqn:11b} can be convexified as
\begin{equation} \label{eqn:12}
    L_{m}\zeta_{m}C_{m}D_{m}{f_{m}^{2}} +\frac{1}{2} \frac{p_{m}^{i}}{y_{m}^{i}} y_{m}^{2} + \frac{1}{2} \frac{y_{m}^{i}}{p_{m}^{i}} p_{m}^{2} \le E_{m}^{max}, \forall{m}
\end{equation}
\noindent
where $p_{m}^{i}$ and $y_{m}^{i}$ are the feasible point of $p_{m}$ and $y_{m}$ at SCA iteration $i$. 

\textit{Constraint \eqref{eqn:11c}}: Similar to  constraint \eqref{eqn:4c}, we approximate RHS of \eqref{eqn:11c} as
\begin{equation} \label{eqn:13}
\begin{split}
    \frac{(m+1) s \ln{2}}{y_{m}} \le b_{m} \left(1+\frac{p_{m}^{i} g_{m}}{b_{m} n_{0}}\right) + \frac{p_{m}^{i} g_{m}}{p_{m} g_{m}+b_{m} n_{0}} \\
    - \frac{(p_{m}^{i} g_{m})^{2}}{p_{m}^{i} g_{m} + b_{m} n_{0}} \frac{1}{p_{m} g_{m}}, \forall{m}.
\end{split}
\end{equation}
Thus, we solve the following convex problem in iteration $i+1$:
\begin{subequations} 
\begin{align}
\min_{p_{m},f_{m}} \quad & \sum_{k=1}^{K}\left[\max_{r \in \mathcal{R}} \left(\left({\frac{L_{r}C_{r}D_{r}}{f_{r}}}+\frac{s}{b_{r}\log_{2}\left(1+\frac{p_{r}g_{r}}{b_{r}n_{0}}\right)}\right)\right.\right. \nonumber\\
& \left.\left.+\sum_{m=1}^{M}\left({\frac{L_{m}C_{m}D_{m}}{f_{m}}}+y_{m}\right)\right)\right] \label{eqn:14a}\\ 
\textrm{s.t.} \quad & \eqref{eqn:11d}, \eqref{eqn:12}, \eqref{eqn:13}. \label{eqn:14b}
\end{align}
\end{subequations}
\textit{For complexity analysis}, this problem consists of $2M$ scalar decision variables and $4M$ linear or quadratic constraints, which results in the per-iteration computational complexity of $\mathcal{O}\left((2M)^2\sqrt{4M}\right)$ \cite{ben2001lectures}. To summarize, we jointly solve the above two blocks to obtain the solutions for \textbf{problem in (18)}, as illustrated in Algorithm 2.


\begin{algorithm}
\footnotesize
	\caption{Proposed optimization algorithm to minimize FL system latency in load forecasting}
	\begin{algorithmic}[1]
		\label{algo: optimization}
		\STATE \textbf{Input:}  Set the iteration index $i=0$;
		\STATE \textbf{Initialization:} a feasible solution
            $(p_{r}^{0}$, $f_{r}^{0}$, $p_{m}^{0}$, $f_{m}^{0})$ for the problem in (18);
		\STATE \textbf{Repeat}
                \STATE Set $i \gets i+1$
                \STATE Solve problem (25) to update $p_{r}^{i}$, $f_{r}^{i}$;
                \STATE Solve problem (30) to update $p_{m}^{i}$, $f_{m}^{i}$;
            \STATE \textbf{Until} convergence.
        \STATE \textbf{Output:} Optimal $\pmb{p_{r}^{*}}$, $\pmb{f_{r}^{*}}$, $\pmb{p_{m}^{*}}$, $\pmb{f_{m}^{*}}$.
  \end{algorithmic}
\end{algorithm}

\section{Simulations and Performance Evaluation} \label{Sec:SimulationsandPerformanceEvaluation}
\subsection{Environment Settings}
For our load forecasting simulations, we used the \textit{Individual Household Electric Power Consumption} dataset \cite{hebrail2012individual}. It is both multivariate and time-series real-life data focused on physics and chemistry. It describes the electricity consumption for a single household over 47 months with seven features for multivariate analysis, from December 2006 to November 2010. The house is in Sceaux, 7km from Paris, France. It has a total of 2,075,259 instances with nine features. For each feature, there are a total of seven variables and two other non-variables: Date and Time. The variables include (1) global active power (household consumption of total active power), (2) global reactive power (household consumption of total reactive power), (3) voltage (the average voltage (volts) in that household), (4) global intensity (the average intensity (apms) in that household), (5) kitchen's active energy (watt-hours), (6) laundry's active energy (watt-hours), and (7) climate control system's active energy (watt-hours).

\begin{figure}[!t]
  \centering
  \begin{subfigure}{.52\columnwidth}
    \centering
    \includegraphics[width=\linewidth]{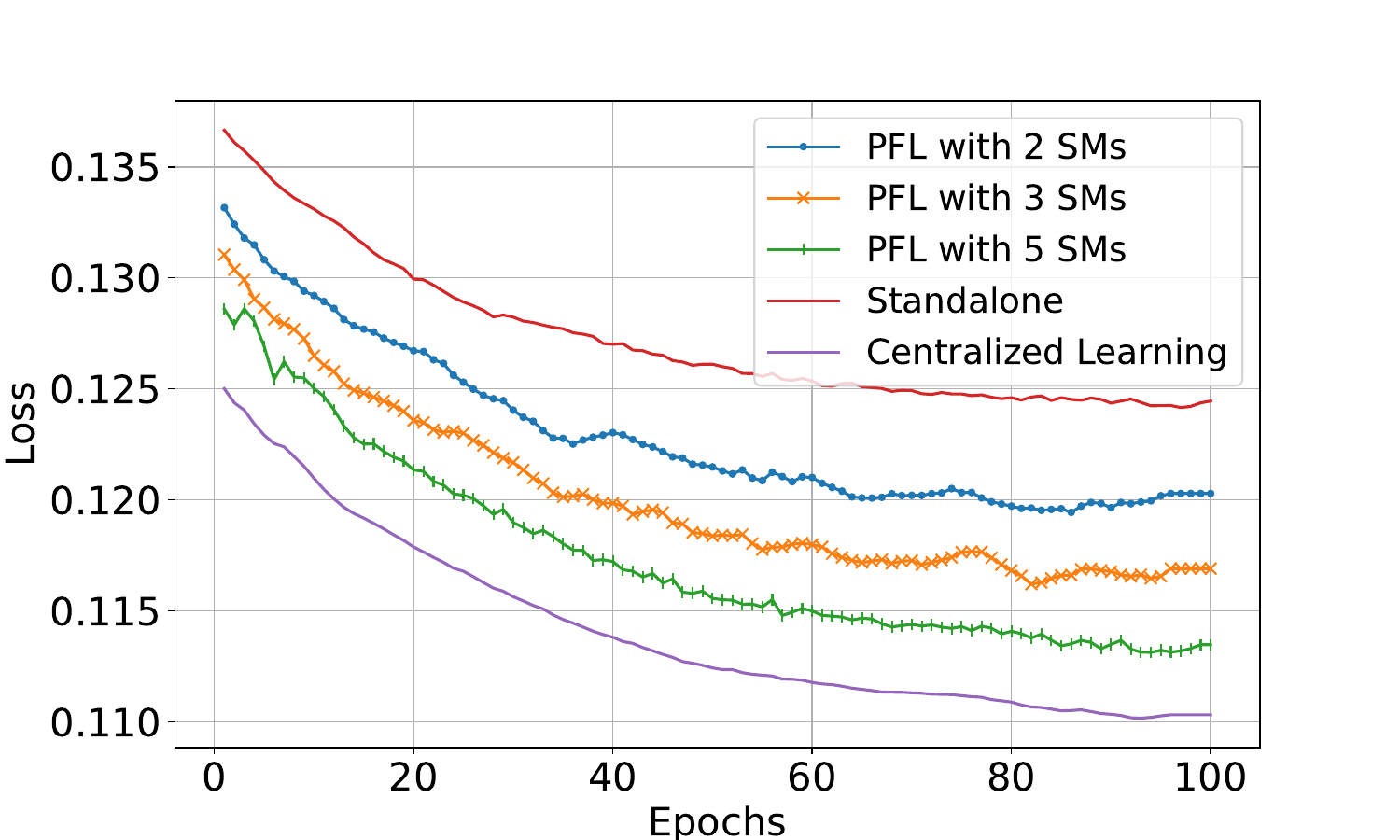}
    \caption{MAE Loss}
    \label{fig:sub1-iid}
  \end{subfigure}%
  \hspace{-0.5cm}
  \begin{subfigure}{.52\columnwidth}
    \centering
    \includegraphics[width=\linewidth]{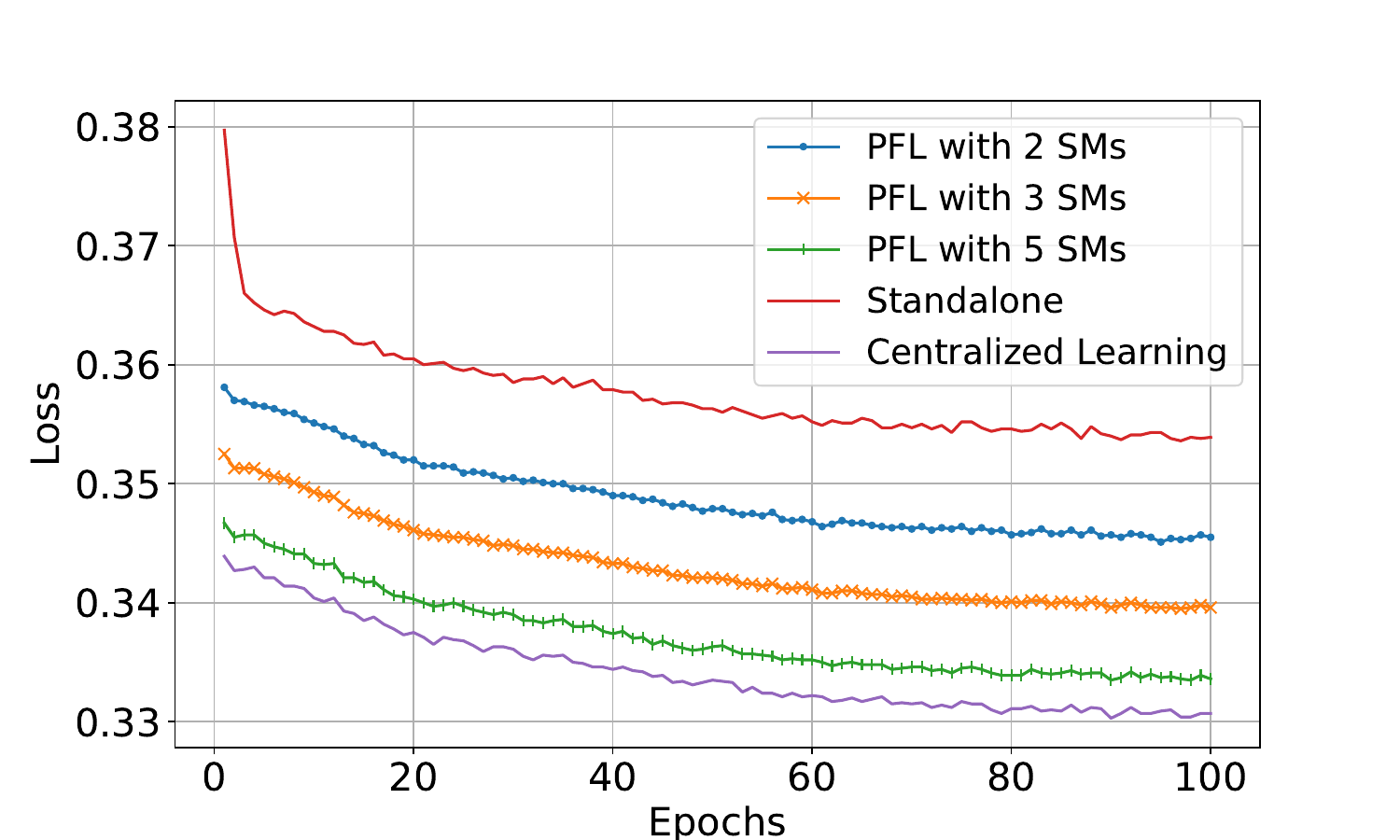}
    \caption{RMSE Loss}
    \label{fig:sub2-iid}
  \end{subfigure}
  \caption{Comparison between different numbers of FL clients (i.e., SMs), standalone, and centralized scheme for IID data.}
  \label{fig: iid}
  \vspace{-2mm}
\end{figure}

\begin{figure}
  \centering
  \begin{subfigure}{.52\columnwidth}
    \centering
    \includegraphics[width=\linewidth]{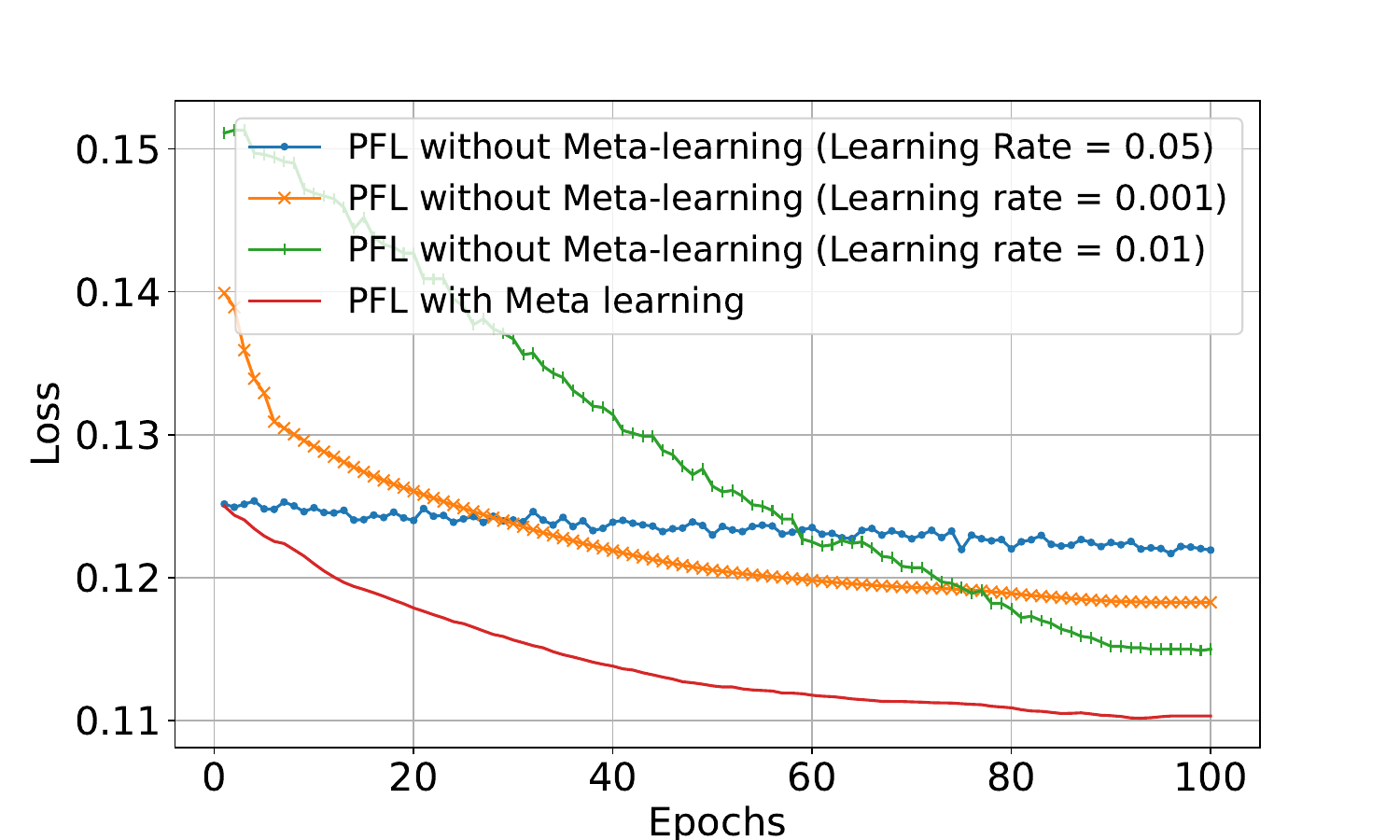}
    \caption{MAE Loss}
    \label{fig:sub1-lr}
  \end{subfigure}%
  \hspace{-0.5cm}
  \begin{subfigure}{.52\columnwidth}
    \centering
    \includegraphics[width=\linewidth]{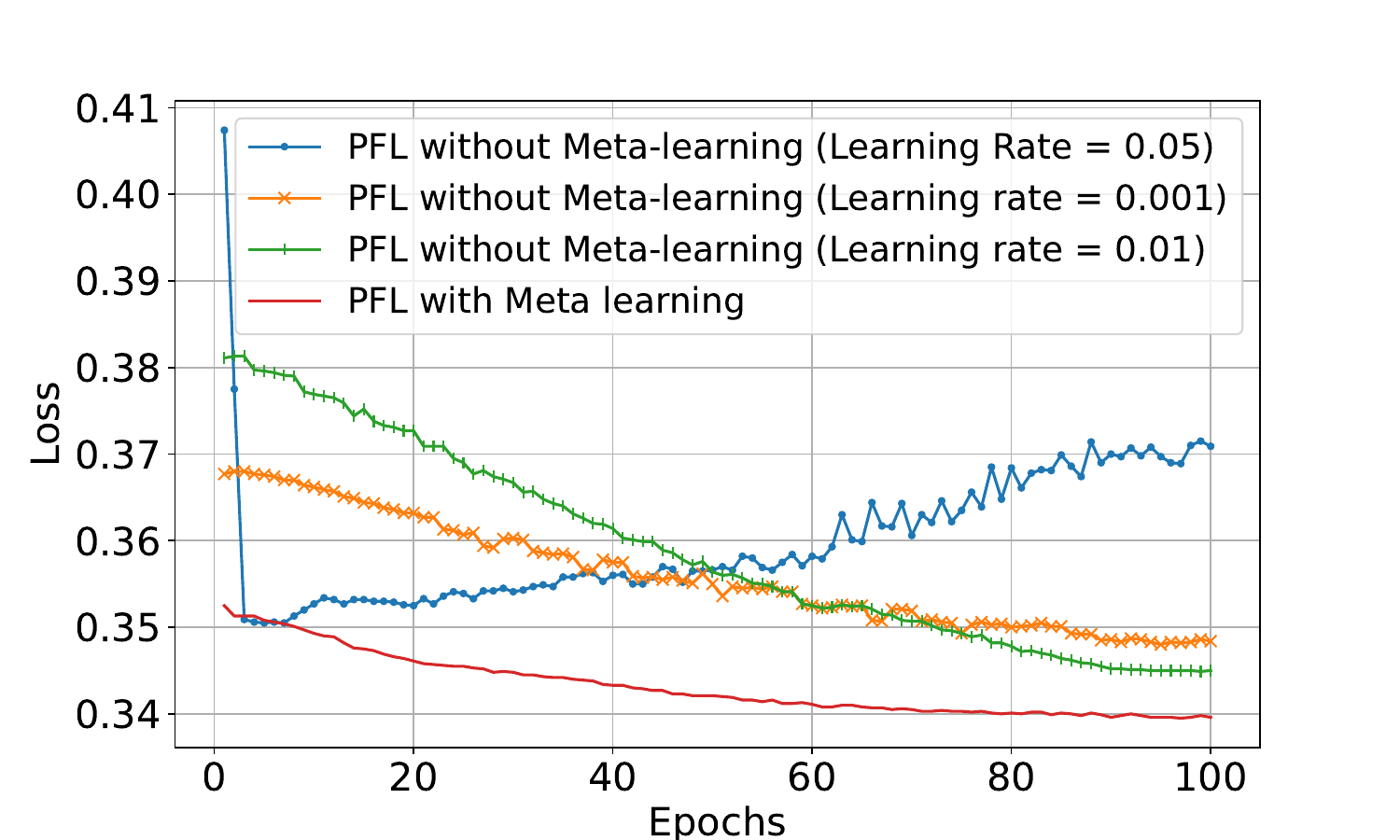}
    \caption{RMSE Loss}
    \label{fig:sub2-lr}
  \end{subfigure}
  \caption{Comparison between different learning rates and meta-learning for 5 SMs non-IID data.}
  \label{fig: lr}
  \vspace{-5mm}
\end{figure}

\begin{figure}
  \centering
  \begin{subfigure}{.52\columnwidth}
    \centering
    \includegraphics[width=\linewidth]{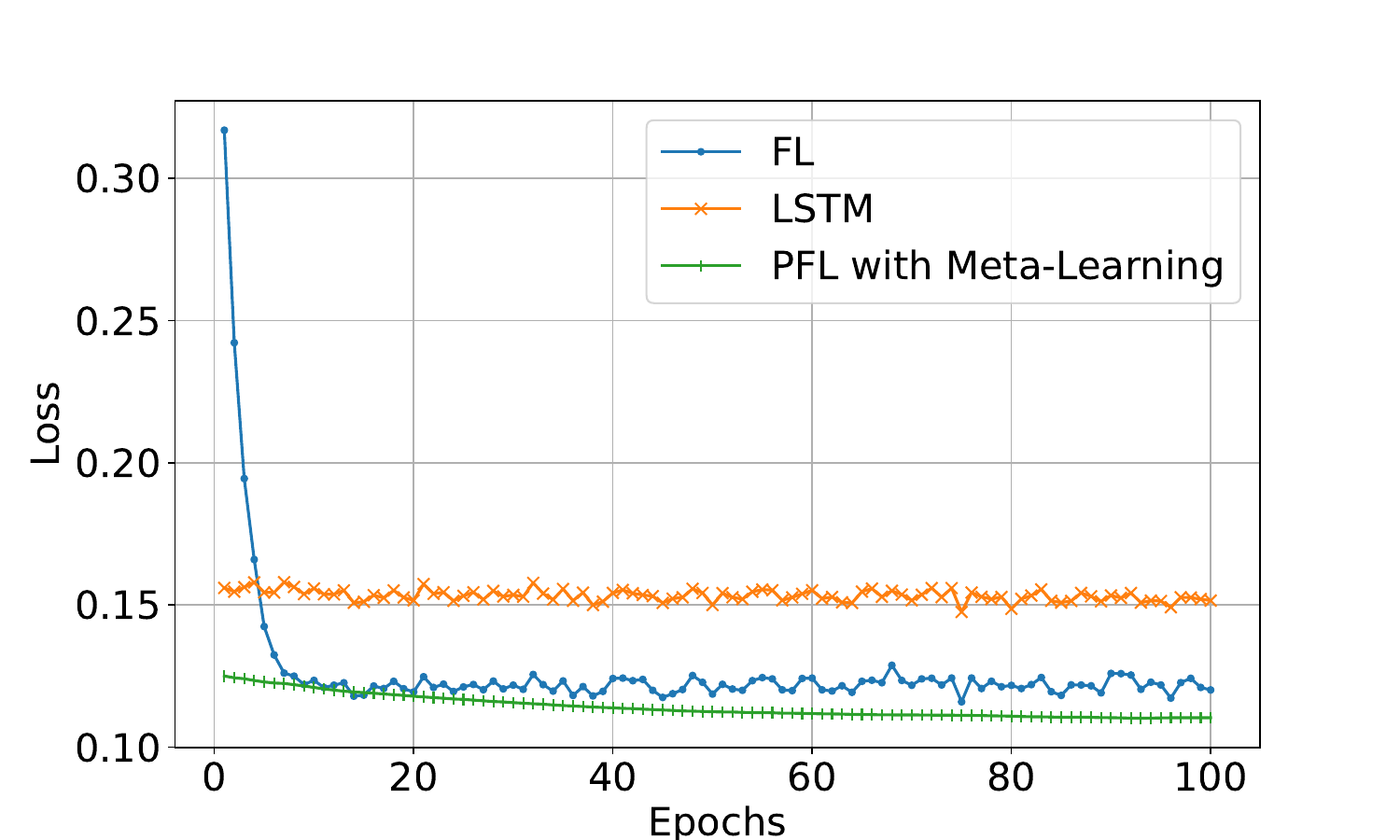}
    \caption{MAE Loss}
    \label{fig:sub1-comp}
  \end{subfigure}%
  \hspace{-0.5cm}
  \begin{subfigure}{.52\columnwidth}
    \centering
    \includegraphics[width=\linewidth]{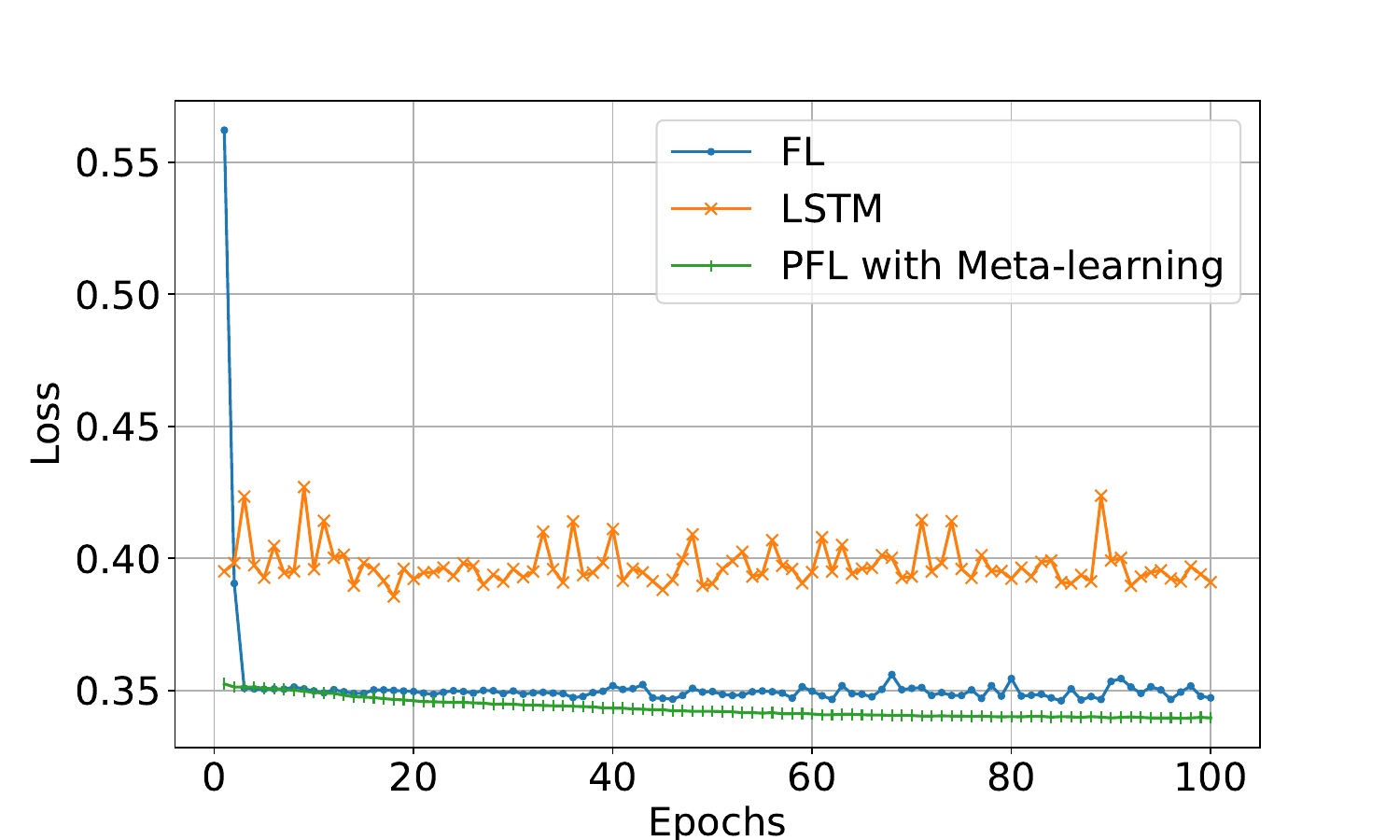}
    \caption{RMSE Loss}
    \label{fig:sub2-comp}
  \end{subfigure}
  \caption{Comparison between state-of-the-art approaches (LSTM and FL) and our approach.}
  \label{fig: comp}
  \vspace{-1mm}
\end{figure}

\textbf{Non-IID SMs distribution.}
\textcolor{black}{For our research, first, we compared our results with 2, 3, and 5 SMs IID data distribution. Then we implemented the comparison using 5 SMs in the non-IID data distribution. Here, the SMs have different \textit{batch sizes} that show different calculation capabilities and \textit{number of data} that demonstrate different data availability. 
For training the model and creating a gradient, we considered three different learning rates: 0.05, 0.001, and 0.0001. A meter could temporarily test its performance for every global epoch based on all three learning rates for 10 local rounds. Then, judging by the performance, we selected the optimal learning rate (the learning rate that produces the lowest loss value) and used that for local training for 10 local rounds. We run our simulation results for 100 global rounds. }

\begin{figure*}
  \centering
  \begin{subfigure}{.66\columnwidth}
    \centering
    \includegraphics[width=\linewidth]{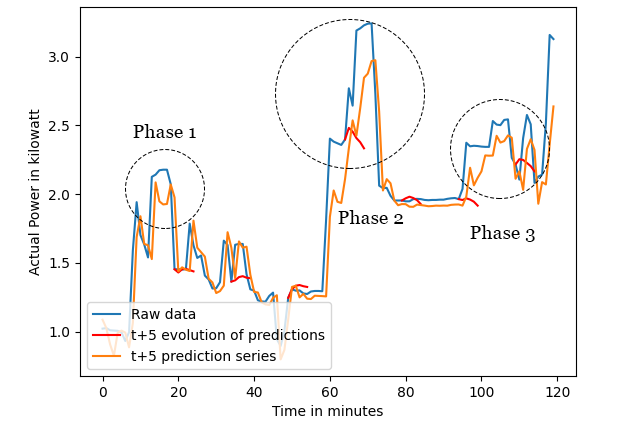}
    \caption{LSTM Result}
    \label{fig:sub1}
  \end{subfigure}%
  \begin{subfigure}{.66\columnwidth}
    \centering
    \includegraphics[width=\linewidth]{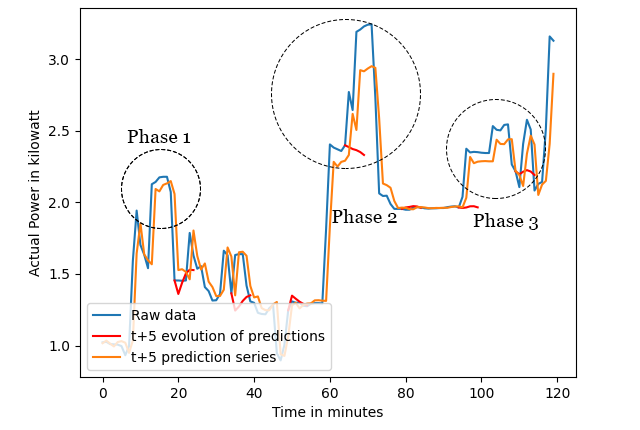}
    \caption{FL Result}
    \label{fig:sub2}
  \end{subfigure}%
  \begin{subfigure}{.66\columnwidth}
    \centering
    \includegraphics[width=\linewidth]{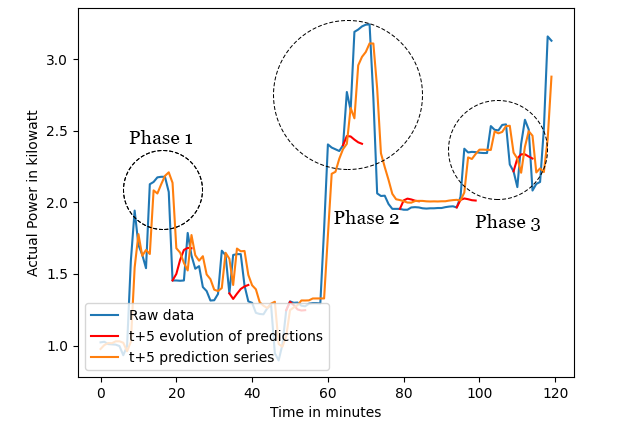}
    \caption{PFL Result}
    \label{fig:sub3}
  \end{subfigure}
  \caption{Simulation result of the original and predicted values for the first 120 minutes in the testing dataset.}
  \label{fig: ovp}
  \vspace{-5mm}
\end{figure*}

\textbf{Local model.} There are multiple layers in the design of the LSTM model for load prediction. Sequences of shape (batch\_size, 24, 1) are first entered into the input layer; each sequence comprises 24 hourly load values. An LSTM layer comprising 50 units comes next, processing the sequential input and capturing temporal dependencies. The next step is to add a dropout layer, whose dropout rate is 0.2. During training, it randomly sets 20\% of the LSTM layer's outputs to zero to prevent over-fitting. If more complicated patterns need to be captured, an additional LSTM layer can be added after this. Lastly, the expected load value for the following hour is generated by adding a completely connected dense layer with a single neuron. This architecture uses the long-term relationships that the LSTM can maintain in the data, making it appropriate for precise load forecasting.

For our load forecasting latency simulations, we considered a multi-hop smart metering network with three routes containing 2, 3, and 4 relay nodes, respectively, and one leaf node per route (unless stated otherwise). Simulation parameters are set to practical values: 20 MHz bandwidth, transmit power $P_r$ and $P_m$ in [5–25] dBm, noise power density $N_0$ = –174 dBm/Hz, and CPU frequencies $F_r = F_m = 2$ GHz \cite{yang2020energy}. Hardware coefficients are set as $\zeta_r = \zeta_m = 10^{-28}$, with local iteration counts $L_r = 5$ for leaf nodes and $L_m = 15$ for relay nodes \cite{yang2020energy}. Simulations were run in MATLAB using YALMIP and MOSEK. We evaluated our joint optimization scheme against two baselines: (i) leaf-only and (ii) relay-only optimization.

\subsection{ Simulation Results for Load Forecasting}
\subsubsection{Training Performance}
\textcolor{black}{At first, Table~\ref{tab:statistical_vs_lstm} compares conventional statistical approaches to the LSTM model in terms of MAE and RMSE loss after 100 training epochs.  As illustrated, statistical approaches like ARIMA \cite{tarmanini2023short}, exponential smoothing \cite{smyl2023drnn}, and regression-based forecasting \cite{madhukumar2022regression} provide larger error values than the LSTM model \cite{bouktif2018optimal}.  This demonstrates the limitations of statistical approaches for capturing complicated temporal relationships and nonlinear patterns in energy consumption data.  By using its recurrent design and memory capacities, the LSTM model achieves improved performance and flexibility, making it more appropriate for modern smart grid forecasting applications.}

\begin{table}
\color{black}
\centering
\footnotesize
\caption{Comparison of Statistical Methods with ML approach (LSTM) for Load Forecasting.}
\label{tab:statistical_vs_lstm}
\begin{tabular}{|l|c|c|}
\hline
\textbf{Method} & \textbf{MAE Loss} & \textbf{RMSE Loss} \\
\hline
ARIMA \cite{tarmanini2023short} & 0.1880 & 0.4125 \\
Exponential Smoothing \cite{smyl2023drnn} & 0.1750 & 0.4001 \\
Regression-based Forecasting \cite{madhukumar2022regression} & 0.1825 & 0.4053 \\
\textbf{LSTM (Centralized)} \cite{bouktif2018optimal} & \textbf{0.1515} & \textbf{0.3910} \\
\hline
\end{tabular}
\end{table}

We implement our proposed personalized Meta-LSTM FL algorithm in IID settings for different SMs. Fig. \ref{fig: iid} describes the result for both MSE loss (a) and RMSE loss (b) for 2,3, and 5 SMs as well as standalone and centralized schemes. 
The figure shows that, as expected, the centralized scheme performs the best while the standalone performs the worst for both loss values. Moreover, the loss value reduces with increased SMs, suggesting improved performance. As a result, we conclude that our approach is adaptable for more SMs and proportionate to the number of SMs. The following simulation results have been obtained using non-IID data from five SMs, as using five SMs has provided the best outcomes. 

\begin{figure}[!ht]
    \centering
    \includegraphics[width=0.45\textwidth]{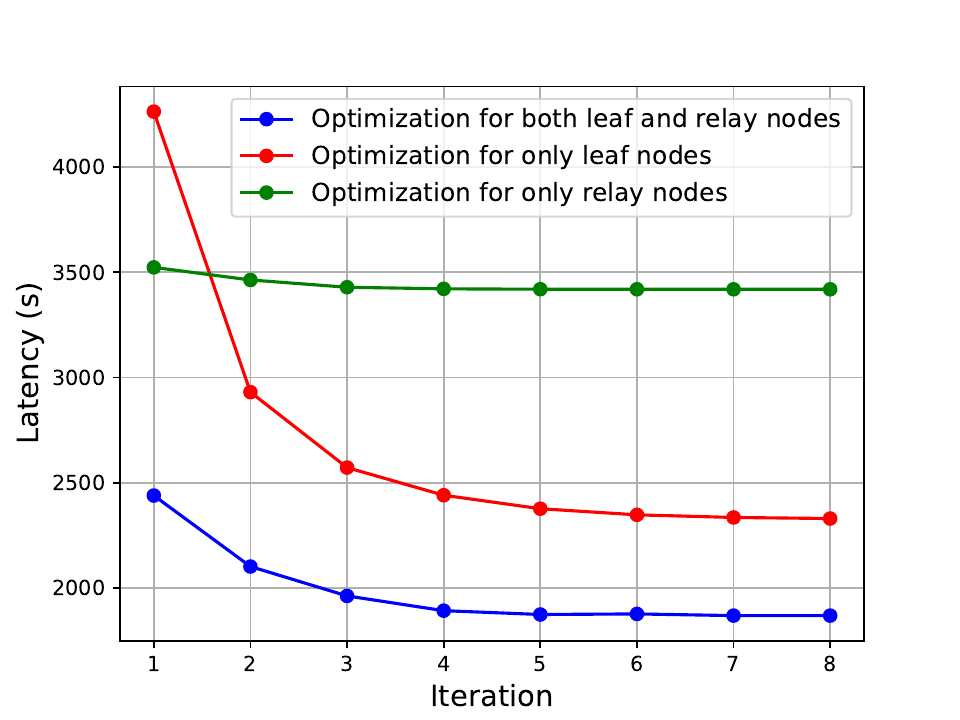}
    \caption{Comparison of training performance. Our joint optimization method can achieve up to 45.33\% lower latency than baselines. }
    \label{fig:7}
\end{figure}
\begin{figure*}[!ht]
    \centering
    \footnotesize
    \begin{subfigure}[t]{0.23\linewidth}
        \centering
        \includegraphics[width=\linewidth]{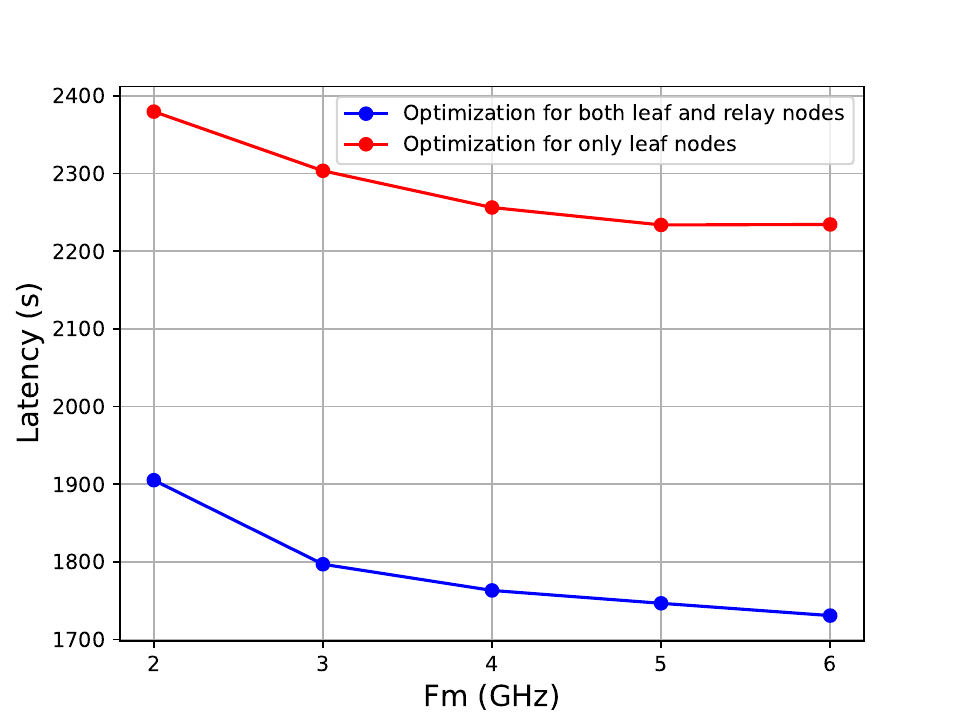}
        \caption{System latency versus maximum frequency of leaf nodes.}
        \label{fig:8a}
    \end{subfigure}
    ~
    \begin{subfigure}[t]{0.23\linewidth}
        \centering
        \includegraphics[width=\linewidth]{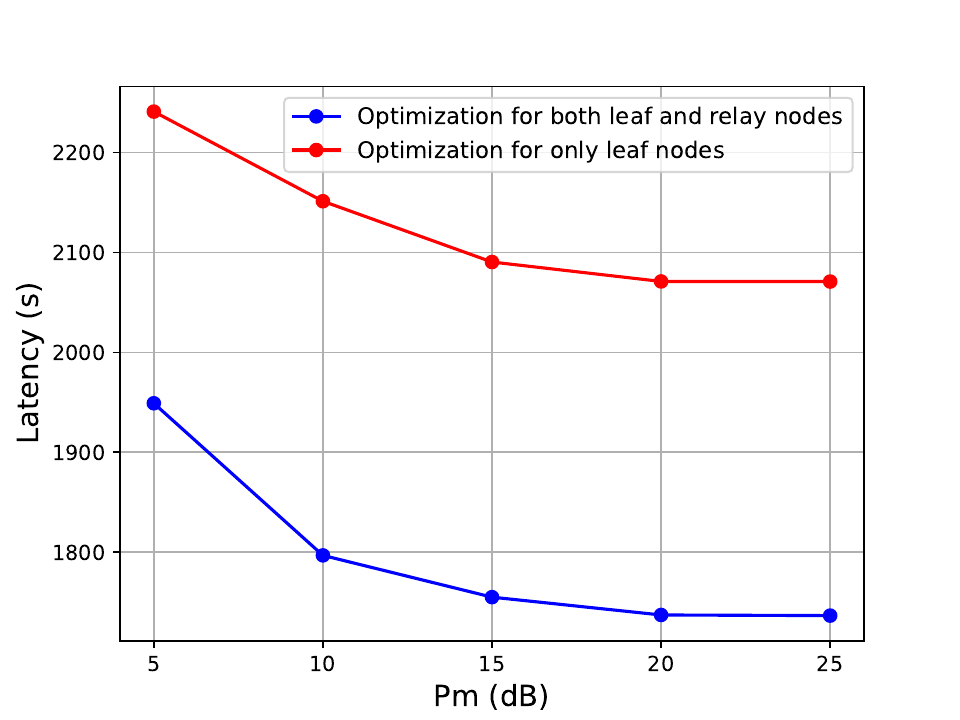}
        \caption{System latency versus maximum transmit power of leaf nodes.}
        \label{fig:8b}
    \end{subfigure}
    \begin{subfigure}[t]{0.23\linewidth}
        \centering
        \includegraphics[width=\linewidth]{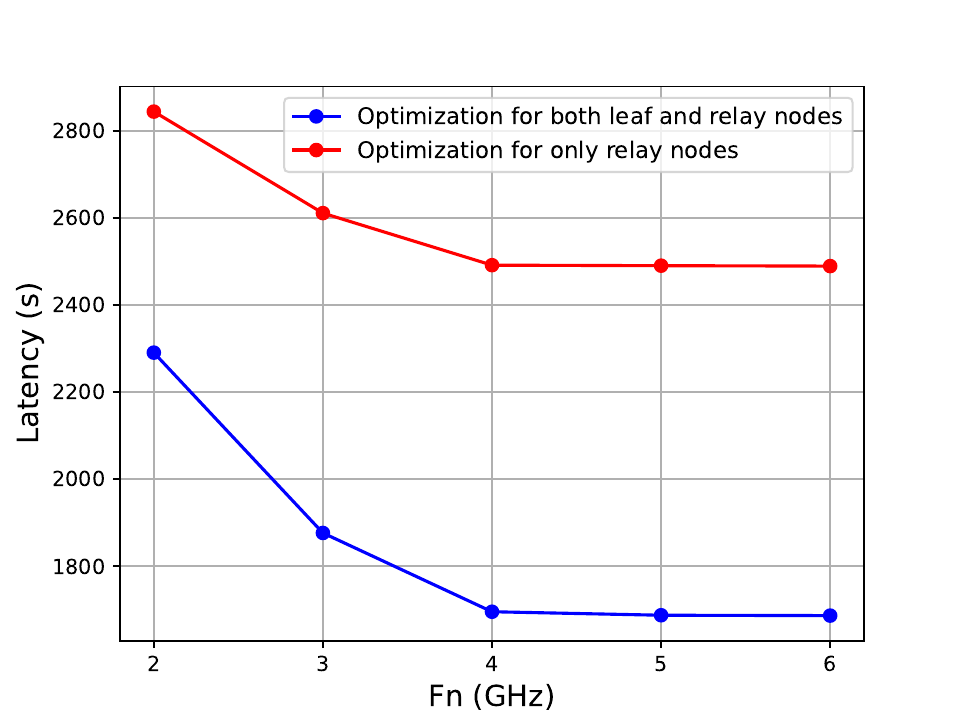}
        \caption{System latency versus maximum frequency of relay nodes.}
        \label{fig:8c}
    \end{subfigure}
    ~
    \begin{subfigure}[t]{0.23\linewidth}
        \centering
        \includegraphics[width=\linewidth]{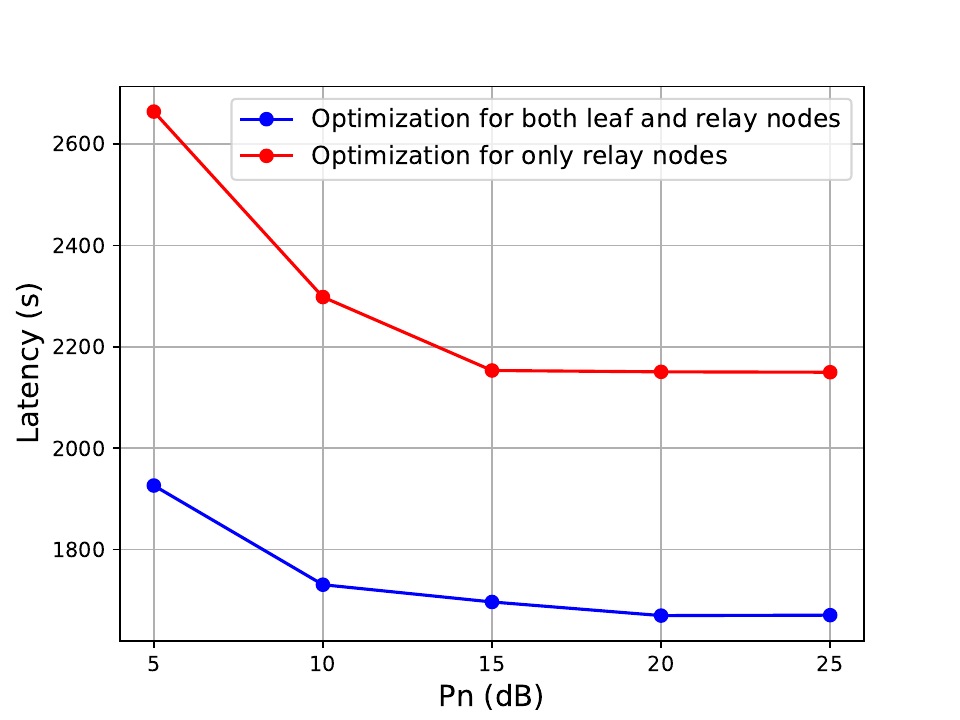}
        \caption{System latency versus maximum transmit power of relay nodes.}
        \label{fig:8d}
    \end{subfigure}
    \caption{Comparison of system latency with different schemes.}
    \vspace{-7mm}
\end{figure*}
Fig. \ref{fig: lr} shows the simulation result for using different learning rates and meta-learning.  The reason why meta-learning works better than other individual learning methods is evident in that figure. While learning rates at 0.05 get off to the best starts, but fail to maintain and overestimate both loss values. Learning rates at 0.001 are too slow to catch up, and communication overheads grow. It is noteworthy to mention that the learning rate of 0.01 did not have a good start to catch up. Nevertheless, meta-learning can extract all the beneficial features from these learning rates through appropriate use, which makes it perfect for more accurate results with less communication overhead. 

\subsubsection{Load Forecasting Performance}
Finally, in Fig. \ref{fig: comp} we have compared our performance with the LSTM algorithm \cite{bouktif2018optimal} and the state-of-the-art FL algorithm \cite{fekri2022distributed} based on global epochs. The figure shows that among the three methods, LSTM performed the worst, as expected. PFL performed somewhat better than FL in this comparison. Furthermore, yields a far more stable result than FL. 

We present the testing findings for the first 120 minutes for three methods in Fig. \ref{fig: ovp}. We concentrate on the three crucial stages, phases 1, 2, and 3. The graphs show the raw data and predictions calculated by different ML models and their evolutions. \textcolor{black}{These phases are chosen because during these phases we can see clear prediction differences between LSTM, FL, and PFL approaches.} In this case, the orange line indicates the expected outcome while the blue line represents the raw data. As a result, the loss value is represented by the space and variations between the orange and blue lines. From Fig \ref{fig: ovp}(a) we can observe that the LSTM technique is not very good at predicting changes, particularly in phases 2 and 3, which have many curves. In phase 1, FL and PFL are far more accurate at predicting. Still, they have not accurately forecasted phase 2's peak. Furthermore, we can observe the variations in phase 2, particularly in phase 3, if we compare the FL and PFL results in Figs. \ref{fig: ovp}(b-c). PFL achieved the highest point in phase 2 and noticeably improved performance in phase 3. 

\textcolor{black}{\subsection{Simulation Results for Load Forecasting Latency }}
\textbf{Trade-off Between Computational Complexity and Latency Reduction.} Fig.~\ref{fig:7} compares our proposed algorithm with scheme 1 and scheme 2, depicting the latency (s) versus the number of iterations. The results demonstrate the superior performance of our proposed algorithm. \textcolor{black}{While the proposed meta-learning-based PFL approach minimizes operational delay by optimizing resource allocation among SMs and relay nodes, it adds significant computational complexity. The computational complexity per iteration is bounded as \(\mathcal{O}((2R)^2 \sqrt{4R}) \) and \(\mathcal{O}((2M)^2 \sqrt{4M}) \), where \(R \) and \(M \) represent the number of leaf and relay nodes, respectively.  As the network grows in size, the per-iteration complexity and overall convergence time increase dramatically.} However, It is evident from the graph that our scheme reaches a stable latency level after the fifth iteration, significantly outperforming the other two schemes in terms of minimizing latency. Specifically, our proposed scheme achieves a 19.79\% reduction in latency compared to scheme 1 and a 45.33\% reduction compared to scheme 2.

We next investigate the latency performance of different schemes. Fig.~\ref{fig:8a} indicates the latency (in seconds) versus the maximum frequency (in GHz) of a leaf node, comparing our proposed algorithm with scheme 1. As higher frequencies generally allow higher data rates, latency decreases with increased maximum frequency of leaf nodes. Both schemes experience reduced latency with higher frequencies, but our proposed algorithm achieves approximately 22.54\% lower latency than scheme 1. This superior performance is due to the algorithm's dynamic adaptation to network conditions, considering both leaf and relay nodes for more efficient resource utilization and minimized latency.

\textcolor{black}{Moreover, Fig.~\ref{fig:8b} illustrates the latency (in seconds) versus the maximum transmit power of a leaf node, comparing our proposed approach with scheme 1.  Our scheme achieves approximately 16.15\% lower latency than scheme 1, despite both schemes benefiting from reduced latency with increased transmit power. Our scheme optimizes resource allocation and communication parameters for leaf and relay nodes, resulting in lower latency and surpassing scheme 1, which does not fully optimize relay node parameters.} \textcolor{black}{As the maximum frequency (GHz) of a relay node increases (as shown in Fig.~\ref{fig:8c}), our algorithm achieves 32.25\% lower latency compared to scheme 2, even though both schemes benefit from the frequency increase. By optimizing resource allocation and communication parameters for both leaf and relay nodes, our scheme ensures more efficient resource use and, consequently, lower latency. This comprehensive approach surpasses scheme 1, which may not fully consider the impact of optimizing relay node parameters on network performance.}

Finally, we evaluate the latency versus the maximum transmit power of a relay node in Fig.~\ref{fig:8d}, highlighting the performance contrast between scheme 2 and our proposed method.
Though latency decreases with increased maximum transmit power of relay nodes in both schemes, our scheme outperforms scheme 2 and achieves 22.29\% lower latency.




\section{Conclusion} \label{Sec:Conslusion}
This paper has proposed a novel PFL approach for load forecasting in smart metering networks.  We have developed a meta-learning algorithm for better load forecasting model training at SMs given non-IID data.  We have also developed an optimization solution to minimize the system latency for the PFL-based load forecasting system by considering the resource allocation of SMs. Simulation results indicate that our proposed PFL method has outperformed existing LSTM and FL approaches with better load forecasting accuracy. Our optimization solution has also achieved significant latency reductions, with up to 45.33\% lower than baseline methods.

In future work, we will consider incentive mechanisms to speed up load forecasting at SMs further. We will also investigate and extend our FL method to other smart grid domains, such as electricity generation prediction.


\section*{Acknowledgment}
This manuscript has been co-authored by UT-Battelle, LLC under Contract No. DE-AC05-00OR22725 with the U.S. Department of Energy. By accepting the article for publication, the publisher acknowledges that the U.S. Government retains a non-exclusive, paid-up, irrevocable, worldwide license to publish or reproduce the published form of the manuscript or allow others to do so, for U.S. Government purposes. The DOE will provide public access to these results under the DOE Public Access Plan (\url{http://energy.gov/downloads/doe-public-access-plan}). This research was partly sponsored by Oak Ridge National Laboratory’s (ORNL’s) Laboratory Directed Research and Development program and by the DOE. The funders had no role in the study design, data collection and analysis, decision to publish, or preparation of this manuscript.

\ifCLASSOPTIONcaptionsoff
  \newpage
\fi
\bibliographystyle{ieeetr}
\bibliography{bibtex/bib/IEEEexample.bib}

\appendix \label{appendix_lemmas}

\section{Proof of Theorem~\eqref{th:main}}\label{app:th}
\label{subsection:Appen}
\subsection{Proof of Lemma 1}
\vspace{-5pt}
\label{Lemma_SGD:upperbound}
From Assumption \eqref{Assump:Variance-gradient}, we have
\begin{equation} 
\footnotesize
\begin{aligned}
& \mathbb{E} ||g_k^j- \bar{g}_k^j||^2 = \mathbb{E} \Big\Vert \frac{1}{N} \sum_{n\in\mathcal{N}} \left(\nabla F_{n}(\boldsymbol{w}_{n,k}^j,\chi_{n,k}^j) - \nabla F_{n}(\boldsymbol{w}_{n,k}^j) \right)\Big\Vert^2 
\\&=\frac{1}{N^2}   \sum_{n\in\mathcal{N}} \mathbb{E} \Big\Vert\left(\nabla F_{n}(\boldsymbol{w}_{n,k}^j,\chi_{n,k}^j) - \nabla F_{n}(\boldsymbol{w}_{n,k}^j) \right)\Big\Vert^2
\leq \frac{\sigma_r^2}{N^2}.
\end{aligned}
\end{equation}
\vspace{-5pt} 
\subsection{Proof of Lemma 2}
\label{Lemma_lemma2:upperbound}
We know that in every global communication round, each UAV performs $J$ rounds of local SGDs where there always exits $j' \leq j$ such that $j-j'\leq J$ and $\boldsymbol{w}_{n,k}^{t'} = \bar{\boldsymbol{w}}_k^{j'}$, $\forall n \in \mathcal{N}$. By using the fact that $\mathbb{E}||X-\mathbb{E}X||^2 = ||X||^2 - ||\mathbb{E}X||^2$ and $\bar{\boldsymbol{w}}_k^j =\mathbb{E}\boldsymbol{w}_{n,k}^j$, we have
\begin{equation} 
\footnotesize
\begin{aligned}
&  \frac{1}{N}\sum_{n\in\mathcal{N}} \mathbb{E} \Big\Vert\bar{\boldsymbol{w}}_k^j-\boldsymbol{w}_{n,k}^j\Big\Vert^2 
= \frac{1}{N}\sum_{n\in\mathcal{N}} \mathbb{E} \Big\Vert\boldsymbol{w}_{n,k}^j - \bar{\boldsymbol{w}}_k^j\Big\Vert^2 
\\&=\frac{1}{N}\sum_{n\in\mathcal{N}}\mathbb{E} \Big\Vert(\boldsymbol{w}_{n,k}^j - \bar{\boldsymbol{w}}_k^{j'}) - (\bar{\boldsymbol{w}}_k^j-\bar{\boldsymbol{w}}_k^{j'}) \Big\Vert^2 
\\&\leq  \frac{1}{N}\sum_{n\in\mathcal{N}} \mathbb{E} \Big\Vert\boldsymbol{w}_{n,k}^j - \bar{\boldsymbol{w}}_k^{j'}\Big\Vert^2  
\leq   \frac{1}{N}\sum_{n\in\mathcal{N}}\mathbb{E}\Big\Vert\left(\sum_{j=j'}^{j-1} (\boldsymbol{w}_{n,k}^j-\bar{\boldsymbol{w}}_k^{j'}) \right) \Big\Vert^2  
\\&=   \frac{1}{N}\sum_{n\in\mathcal{N}}\mathbb{E}\Big\Vert\left(\sum_{j=j'}^{j-1} \eta_k \nabla F_{n}(\boldsymbol{w}_{n,k}^j,\chi_{n,k}^j) \right) \Big\Vert^2 
\\&\leq \frac{1}{N}\sum_{n\in\mathcal{N}}\mathbb{E} \Big\Vert\left(\sum_{j=1}^{j-j'} \eta_k \nabla F_{n}(\boldsymbol{w}_{n,k}^j,\chi_{n,k}^j) \right) \Big\Vert^2  ,
\end{aligned}
\end{equation}
where the last inequality holds since the learning rate $\eta_k$ is decreasing. Using the fact that $||\sum_{j=1}^{U} z^j||^2 \leq U\sum_{j=1}^{U} ||z^j||^2$, $j-j' \leq J$ and assume that $\eta_k^{j'} \leq 2\eta_k$ and $||\nabla F(\boldsymbol{w}_{n,k}^j,\chi_{n,k}^j)||^2 \leq B^2$ for positive constant $B$, we have
\begin{equation} 
\tiny
\begin{aligned}
&  \frac{1}{N}\sum_{n\in\mathcal{N}} \mathbb{E} \Big\Vert\bar{\boldsymbol{w}}_k^j-\boldsymbol{w}_{n,k}^j\Big\Vert^2  
\leq  \frac{1}{N}\sum_{n\in\mathcal{N}}\left( \mathbb{E} \sum_{j=1}^{j-j'} \eta_k^2(j-j')  \Big\Vert \nabla F_{n}(\boldsymbol{w}_{n,k}^j,\chi_{n,k}^j)\Big\Vert^2 \right) 
\\&\leq  \frac{1}{N}\sum_{n\in\mathcal{N}}\left( \mathbb{E} \sum_{j=1}^{j-j'} \eta_k^2J \Big\Vert\nabla F_{n}(\boldsymbol{w}_{n,k}^j,\chi_{n,k}^j)\Big\Vert^2 \right) 
\\&\leq  \frac{1}{N}\sum_{n\in\mathcal{N}}\left({(\eta_k^{j'})}^2J  \sum_{j=1}^{j-j'}  B^2 \right) 
\leq  \frac{1}{N}\sum_{n\in\mathcal{N}} {(\eta_k^{j'})}^2J B^2 \leq  4\eta_kJ B^2.
\end{aligned}
\end{equation}
\vspace{-15pt} 
\subsection{Proof of Lemma 3}
\label{Lemma_lemma3:SGDupdate}
From the SGD update rule $\bar{\boldsymbol{w}}_k^{j+1}  = \bar{\boldsymbol{w}}_k^j - \eta_kg_k^j $ and $||a+b||^2 \leq 2||a||^2 +2||b||^2$ for two real-valued vectors $a$ and $b$, we have 
\begin{equation}\label{eq:main1}
\footnotesize 
\begin{aligned} 
 &||\bar{\boldsymbol{w}}_k^{j+1} - \boldsymbol{w}^*||^2 = ||\bar{\boldsymbol{w}}_k^j - \eta_kg_k^j  - \boldsymbol{w}^*||^2  \leq \\ 
 &\quad \quad \quad \quad \quad \quad \quad \quad \quad \underbrace{||\bar{\boldsymbol{w}}_k^j - \eta_kg_k^j  - \boldsymbol{w}^*||^2}_{(A)}. 
\end{aligned}
\end{equation}
We now focus on the bounding term $(A)$ in~\eqref{eq:main1}. We have
\begin{equation} \label{equa:termA}
\footnotesize
\begin{aligned}
&||\bar{\boldsymbol{w}}_k^j - \eta_kg_k^j  - \boldsymbol{w}^*||^2 = ||\bar{\boldsymbol{w}}_k^j - \eta_kg_k^j - \boldsymbol{w}^* -\eta_k\bar{g}_k^j +\eta_k\bar{g}_k^j||^2 
\\&=||(\bar{\boldsymbol{w}}_k^j - \boldsymbol{w}^* -\eta_k\bar{g}_k^j||^2 + 2\eta_k\langle \bar{\boldsymbol{w}}_k^j - \boldsymbol{w}^* -\eta_k\bar{g}_k^j,\bar{g}_k^j -g_k^j\rangle 
\\& + \eta_k^2||g_k^j - \bar{g}_k^j||^2  
=\underbrace{||(\bar{\boldsymbol{w}}_k^j - \boldsymbol{w}^* -\eta_k\bar{g}_k^j||^2}_{(B)}  + \eta_k^2||g_k^j - \bar{g}_k^j||^2,
\end{aligned}
\end{equation}
where $\langle \bar{\boldsymbol{w}}_k^j - \boldsymbol{w}^* -\eta_k\bar{g}_k^j,\bar{g}_k^j -g_k^j\rangle =0$. We now focus on the bounding term $(B)$. We have \vspace{-5pt}
\begin{equation} \label{equa:boundB1}
\footnotesize 
\begin{aligned} 
&||(\bar{\boldsymbol{w}}_k^j - \boldsymbol{w}^* -\eta_k\bar{g}_k^j||^2 
= ||\bar{\boldsymbol{w}}_k^j - \boldsymbol{w}^*||^2 + \eta_k^2||\bar{g}_k^j||^2 
\\& -2\eta_k \frac{1}{N} \sum_{n\in\mathcal{N}} \langle \bar{\boldsymbol{w}}_k^j - \boldsymbol{w}^*,\nabla F_{n}(\boldsymbol{w}_{n,k}^j)    \rangle
\\& \leq ||\bar{\boldsymbol{w}}_k^j - \boldsymbol{w}^*||^2 + \eta_k^2\frac{1}{N} \sum_{n\in\mathcal{N}}||\nabla F_{n}(\boldsymbol{w}_{n,k}^j)||^2 
\\&-2\eta_k \frac{1}{N} \sum_{n\in\mathcal{N}} \langle \bar{\boldsymbol{w}}_k^j - \boldsymbol{w}_{n,k}^j +\boldsymbol{w}_{n,k}^j - \boldsymbol{w}^*,\nabla F_{n}(\boldsymbol{w}_{n,k}^j)  \rangle
\\& \leq ||\bar{\boldsymbol{w}}_k^j - \boldsymbol{w}^*||^2 + 2\eta_k^2\frac{L}{N} \sum_{n\in\mathcal{N}} (F_{n}(\boldsymbol{w}_{n,k}^j) - F^*) 
\\&-2\eta_k \frac{1}{N} \sum_{n\in\mathcal{N}} \langle \bar{\boldsymbol{w}}_k^j - \boldsymbol{w}_{n,k}^j,\nabla F_{n}(\boldsymbol{w}_{n,k}^j) \rangle
\\&-2\eta_k \frac{1}{N} \sum_{n\in\mathcal{N}} \langle \boldsymbol{w}_{n,k}^j - \boldsymbol{w}^*,\nabla F_{n}(\boldsymbol{w}_{n,k}^j) \rangle,
\end{aligned}
\end{equation}
where we applied $||\sum_{n\in\mathcal{N}} z_n||^2 \leq N\sum_{n\in\mathcal{N}} ||z_n||^2$ in the first inequality, and in the second inequality we applied  L-smoothness  $||\nabla F_{n}(\boldsymbol{w}_{n,k}^j)||^2 \leq 2L (F_{n}(\boldsymbol{w}_{n,k}^j) - F^*)$. For the third term in~\eqref{equa:boundB1}, via Cauchy–Schwarz  inequalities: $2\langle a,b \rangle \leq \frac{1}{\varepsilon}||a||^2 +\varepsilon||b||^2$ for $\varepsilon>0$, we have
\vspace{-5pt}
\begin{equation} 
\footnotesize
\begin{aligned}
& -2 \langle \bar{\boldsymbol{w}}_k^j - \boldsymbol{w}_{n,k}^j,\nabla F_{n}(\boldsymbol{w}_{n,k}^j) \rangle 
= 2 \langle \boldsymbol{w}_{n,k}^j - \bar{\boldsymbol{w}}_k^j,\nabla F_{n}(\boldsymbol{w}_{n,k}^j) \rangle 
\\&\leq \frac{1}{\eta_k} ||\boldsymbol{w}_{n,k}^j - \bar{\boldsymbol{w}}_k^j||^2 + \eta_k||\nabla F_{n}(\boldsymbol{w}_{n,k}^j)||^2
\\& \leq \frac{1}{\eta_k} ||\boldsymbol{w}_{n,k}^j - \bar{\boldsymbol{w}}_k^j||^2 + 2\eta_k L (F_{n}(\boldsymbol{w}_{n,k}^j) - F^*).
\end{aligned}
\end{equation}
 For the last term in~\eqref{equa:boundB1}, by using $\mu$-strong convexity, we have: $\footnotesize \langle \boldsymbol{w}_{n,k}^j - \boldsymbol{w}^*,\nabla F_{n}(\boldsymbol{w}_{n,k}^j) \rangle \geq (F_{n}(\boldsymbol{w}_{n,k}^j) - F^*) +\frac{\mu}{2}||\boldsymbol{w}_{n,k}^j - \boldsymbol{w}^*||^2$. Therefore, \eqref{equa:boundB1} can be rewritten as
\begin{equation} \label{Equa:subB1}
\footnotesize
\begin{aligned}
&||(\bar{\boldsymbol{w}}_k^j - \boldsymbol{w}^* -\eta_k\bar{g}_k^j||^2 
\leq ||\bar{\boldsymbol{w}}_k^j - \boldsymbol{w}^*||^2 + 
 2\eta_k^2\frac{L}{N} \sum_{n\in\mathcal{N}}(F_{n}(\boldsymbol{w}_{n,k}^j) - F^*) 
\\&+ \eta_k\frac{1}{N} \sum_{n\in\mathcal{N}} \left(\frac{1}{\eta_k} ||\boldsymbol{w}_{n,k}^j - \bar{\boldsymbol{w}}_k^j||^2 + 2\eta_k L (F_{n}(\boldsymbol{w}_{n,k}^j) - F^*) \right) 
\\&-2\eta_k \frac{1}{N} \sum_{n\in\mathcal{N}}(F_{n}(\boldsymbol{w}_{n,k}^j) - F^*) -\mu\eta_k \frac{1}{N} \sum_{n\in\mathcal{N}}\frac{\mu}{2}||\boldsymbol{w}_{n,k}^j - \boldsymbol{w}^*||^2
\\&\leq ||\bar{\boldsymbol{w}}_k^j - \boldsymbol{w}^*||^2 + 2\eta_k(2\eta_kL-1) \frac{1}{N} \sum_{n\in\mathcal{N}}(F_{n}(\boldsymbol{w}_{n,k}^j) - F^*) 
\\&+ \frac{1}{N} \sum_{n\in\mathcal{N}}||\bar{\boldsymbol{w}}_k^j-\boldsymbol{w}_{n,k}^j||^2-\mu\eta_k \frac{1}{N} \sum_{n\in\mathcal{N}}||\boldsymbol{w}_{n,k}^j - \boldsymbol{w}^*||^2
\\&= (1-\mu\eta_k)||\bar{\boldsymbol{w}}_k^j - \boldsymbol{w}^*||^2 + 2\eta_k(2\eta_kL-1) \frac{1}{N} \sum_{n\in\mathcal{N}}(F_{n}(\boldsymbol{w}_{n,k}^j) - F^*) 
\\&+ \frac{1}{N}\sum_{n\in\mathcal{N}}||\bar{\boldsymbol{w}}_k^j-\boldsymbol{w}_{n,k}^j||^2,
\end{aligned}
\end{equation}
where we used the fact: $\frac{1}{N} \sum_{n\in\mathcal{N}}||\boldsymbol{w}_{n,k}^j - \boldsymbol{w}^*||^2 = ||\bar{\boldsymbol{w}}_k^j - \boldsymbol{w}^*||^2$. We assume $\eta_k \leq \frac{1}{4L}$, it holds $\eta_kL \leq \frac{1}{4} \Longrightarrow 2\eta_kL -1 \leq -\frac{1}{2}$. Thus we have \vspace{-10pt}
\begin{equation} \label{equa:Cterm}
\footnotesize
\begin{aligned}
&||(\bar{\boldsymbol{w}}_k^j - \boldsymbol{w}^* -\eta_k\bar{g}_k^j||^2 \leq (1-\mu\eta_k)||\bar{\boldsymbol{w}}_k^j - \boldsymbol{w}^*||^2  
\\&+ \frac{1}{N}\sum_{n\in\mathcal{N}}||\bar{\boldsymbol{w}}_k^j-\boldsymbol{w}_{n,k}^j||^2- \underbrace{\frac{1}{2} \frac{1}{N} \sum_{n\in\mathcal{N}}(F_{n}(\boldsymbol{w}_{n,k}^j) - F^*)}_{(C)}.
\end{aligned}
\end{equation}
To bound $(C)$, we have
\vspace{-5pt}
\begin{equation}\scriptsize
\begin{aligned}
&- \frac{1}{N} \sum_{n\in\mathcal{N}}(F_{n}(\boldsymbol{w}_{n,k}^j) - F^*)
= -\frac{1}{N} \sum_{n\in\mathcal{N}}(F_{n}(\boldsymbol{w}_{n,k}^j) - F_n(\bar{\boldsymbol{w}}_k^j)) 
\\&- \frac{1}{N} \sum_{n\in\mathcal{N}}(F_n(\bar{\boldsymbol{w}}_k^j) - F^*)
 \leq - \frac{1}{N} \sum_{n\in\mathcal{N}} \langle \boldsymbol{w}_{n,k}^j - \bar{\boldsymbol{w}}_k^j, \nabla F_n(\bar{\boldsymbol{w}}_k^j)\rangle - (F_{n}(\bar{\boldsymbol{w}}_k^j) - F^*)
\\& \leq  \frac{1}{2}\frac{1}{N} \sum_{n\in\mathcal{N}}\left(\frac{1}{\eta_k}||\boldsymbol{w}_{n,k}^j - \bar{\boldsymbol{w}}_k^j||^2 +\eta_k ||\nabla F_n(\bar{\boldsymbol{w}}_k^j)||^2 \right) - (F_{n}(\bar{\boldsymbol{w}}_k^j) - F^*)
\\& \leq  \frac{1}{2}\frac{1}{N} \sum_{n\in\mathcal{N}}\left(\frac{1}{\eta_k}||\boldsymbol{w}_{n,k}^j - \bar{\boldsymbol{w}}_k^j||^2 +2\eta_kL ||F_n(\bar{\boldsymbol{w}}_k^j) -F^*||^2 \right) - (F_{n}(\bar{\boldsymbol{w}}_k^j) - F^*)
\\& = \frac{1}{2\eta_k}\frac{1}{N} \sum_{n\in\mathcal{N}}||\boldsymbol{w}_{n,k}^j -  \bar{\boldsymbol{w}}_k^j||^2 + \eta_kL\frac{1}{N} \sum_{n\in\mathcal{N}}||F_n(\bar{\boldsymbol{w}}_k^j) -F^*||^2- (F_{n}(\bar{\boldsymbol{w}}_k^j) - F^*)
\\& =\frac{1}{2\eta_k}\frac{1}{N} \sum_{n\in\mathcal{N}}||\boldsymbol{w}_{n,k}^j -  \bar{\boldsymbol{w}}_k^j||^2 + (\eta_kL-1)(F_{n}(\bar{\boldsymbol{w}}_k^j) - F^*).
\end{aligned}
\end{equation}
Therefore,~\eqref{equa:Cterm}
is further expressed as
\begin{equation} \tiny
\begin{aligned}\label{equa:subsubC}
&||(\bar{\boldsymbol{w}}_k^j - \boldsymbol{w}^* -\eta_k\bar{g}_k^j||^2 \leq (1-\mu\eta_k)||\bar{\boldsymbol{w}}_k^j - \boldsymbol{w}^*||^2  + \frac{1}{N}\sum_{n\in\mathcal{N}}||\bar{\boldsymbol{w}}_k^j-\boldsymbol{w}_{n,k}^j||^2 + 
\\&\frac{1}{4\eta_k}\frac{1}{N} \sum_{n\in\mathcal{N}}||\boldsymbol{w}_{n,k}^j -  \bar{\boldsymbol{w}}_k^j||^2 + \frac{1}{2}(\eta_kL-1)(F_{n}(\bar{\boldsymbol{w}}_k^j) - F^*)
\\&\leq (1-\mu\eta_k)||\bar{\boldsymbol{w}}_k^j - \boldsymbol{w}^*||^2  + \frac{1}{N}\sum_{n\in\mathcal{N}}||\bar{\boldsymbol{w}}_k^j-\boldsymbol{w}_{n,k}^j||^2 
\\& + 
\frac{1}{4\eta_k}\frac{1}{N} \sum_{n\in\mathcal{N}}||\boldsymbol{w}_{n,k}^j -  \bar{\boldsymbol{w}}_k^j||^2
\\& = (1-\mu\eta_k)||\bar{\boldsymbol{w}}_k^j - \boldsymbol{w}^*||^2  + \left(1+\frac{1}{4\eta_k}\right) \frac{1}{N}\sum_{n\in\mathcal{N}}||\bar{\boldsymbol{w}}_k^j-\boldsymbol{w}_{n,k}^j||^2,
\end{aligned}
\end{equation}
where we used the fact that $\eta_kL-1 \leq 0$ and $F_{n}(\bar{\boldsymbol{w}}_k^j) - F^* \geq 0$ and thus $\frac{1}{2}(\eta_kL-1)(F_{n}(\bar{\boldsymbol{w}}_k^j) - F^*) \leq 0$. We now plug~\eqref{equa:subsubC} back into~\eqref{equa:termA}, we have
\begin{equation}\footnotesize  \label{equa:subsubC-final}
\begin{aligned}
&||\bar{\boldsymbol{w}}_k^j - \eta_kg_k^j  - \boldsymbol{w}^*||^2 
= (1-\mu\eta_k)||\bar{\boldsymbol{w}}_k^j - \boldsymbol{w}^*||^2  
\\&+ \left(1+\frac{1}{4\eta_k}\right) \frac{1}{N}\sum_{n\in\mathcal{N}}||\bar{\boldsymbol{w}}_k^j-\boldsymbol{w}_{n,k}^j||^2 +\eta_k^2||g_k^j - \bar{g}_k^j||^2.
\end{aligned}
\end{equation}
By plugging \eqref{equa:subsubC-final} into \eqref{eq:main1} and taking expectation we obtain
\begin{equation} \tiny
\begin{aligned}
&\mathbb{E}||\bar{\boldsymbol{w}}_k^{j+1} - \boldsymbol{w}^*||^2  \leq (1-\mu\eta_k)\mathbb{E}||\bar{\boldsymbol{w}}_k^j - \boldsymbol{w}^*||^2  
\\&+ \left(1+\frac{1}{\eta_k}\right) \mathbb{E} \left[ \frac{1}{N}\sum_{n\in\mathcal{N}}\Big\Vert\bar{\boldsymbol{w}}_k^j-\boldsymbol{w}_{n,k}^j\Big\Vert^2  \right]  +\eta_k^2\mathbb{E}||g_k^j - \bar{g}_k^j||^2.  
\end{aligned}
\end{equation}

\subsection{Proof of Proof 1}
\label{Proof_globalbound}
Based on Lemmas 1,2,3,  we have
\begin{equation} \footnotesize \label{equa:updaterule_final}
\begin{aligned}
&\mathbb{E}||\bar{\boldsymbol{w}}_k^{j+1} - \boldsymbol{w}^*||^2  
\leq (1-\mu\eta_k)\mathbb{E}||\bar{\boldsymbol{w}}_k^j - \boldsymbol{w}^*||^2  
\\&+ 4 \left(1+\frac{1}{\eta_k}\right) \eta_kJ B^2 + \frac{\eta_k^2\sigma_r^2}{N^2}
+ \frac{1}{N}\sum_{n\in\mathcal{N}}\frac{H^2}{\epsilon_{n,k}-z}.
\end{aligned}
\end{equation}
Let us define $ Y_k^j = \mathbb{E}||\bar{\boldsymbol{w}}_k^j - \boldsymbol{w}^*||^2$ and $\Phi_k = 4 \left(\frac{\eta_k+1}{\eta_k^2} \right)J B^2 + \frac{\sigma_r^2}{N^2}
+ \frac{1}{N}\sum_{n\in\mathcal{N}}\frac{H^2}{\epsilon_{n,k}-z}$, from \eqref{equa:updaterule_final} we have
\begin{equation}\label{equa:ytransform}
\sum_{t=1}^{T}Y_k^{j+1} \leq \sum_{j=0}^{J-1} (1-\mu\eta_k)Y_k^j + \eta_k^2\Phi_k,
\end{equation}
By $Y_k = \sum_{j=0}^{J-1}Y_k^j $, \eqref{equa:ytransform} is rewritten as
\begin{equation}\label{equa:ykform}
Y_k^{j+1} \leq (1-\mu\eta_k)Y_k^j  + \eta_k^2\Phi_k,   
\end{equation}
We define a diminishing stepsize $\eta_k = \frac{4\theta}{k+\omega}$ for some $\theta >\frac{1}{4\mu}$ and $\omega >0$. By defining $m_k =\max \{\frac{\theta^2\Phi_k}{4\theta\mu-1}, (\omega+1)Y_{k-1}\}$, we prove that $Y_k \leq \frac{m_k}{k+\omega}$ by induction. Due to $4\theta\mu >1$, from \eqref{equa:ykform} we have
\begin{equation} \footnotesize  \label{equa:_final}
\begin{aligned}
&Y_{k+1} = \left(1-\frac{4\theta\mu}{k+\omega} \right) \frac{m_k}{k+\omega} +  \frac{16\theta^2}{(k+\omega)^2}\Phi_k
 \leq \frac{k+\omega-1}{(k+\omega)^2}m_k 
 \\&+ \frac{16\theta^2}{(k+\omega)^2}\Phi_k
- \frac{4\theta\mu-1}{(k+\omega)^2} 
 \leq \frac{k+\omega-1}{(k+\omega)^2}m_k - \frac{4\theta\mu-1}{(k+\omega)^2}
  \\&\leq \frac{k+\omega-4\theta\mu}{(k+\omega)^2 - (4\theta\mu)^2}m_k 
 = \frac{1}{k+\omega+4\theta\mu}m_k \leq \frac{1}{k+\omega+1}m_k.
\end{aligned}
\end{equation}
We choose $\theta = \frac{4}{\mu}$ and $\omega = \frac{L}{\mu}$ , it follows that
\begin{equation}  \footnotesize
\begin{aligned}
&m_k =\max \{\frac{\theta^2\Phi_k}{4\theta\mu-1}, (\omega+1)Y_{k-1}\} \leq \frac{\theta^2\Phi_k}{4\theta\mu-1} + (\omega+1)Y_{k-1} 
\\&= \frac{16\Phi_k}{15\mu^2} + \left( \frac{L}{\mu}+1\right)Y_{k-1}.
\end{aligned}
\end{equation}
By using the $L$-smoothness of $F(.)$ and $\mu$-strong convexity of $F_{n}(\boldsymbol{w}_{k-1})$: $\mathbb{E}||\boldsymbol{w}_{k-1} - \boldsymbol{w}^*||^2 \leq \frac{2}{\mu}(F_{n}(\bar{\boldsymbol{w}}_{k-1}) -F^*)$, we have
\begin{equation} \footnotesize \label{equa:final_convergenceIID}
\begin{aligned}
&\mathbb{E}\left[F_{n}(\bar{\boldsymbol{w}}_k)\right] -F^* \leq \frac{L}{2}Y_k\leq \frac{L}{2}\frac{m_k}{(k+\omega)} 
\\&\leq \frac{L}{2(k+L/\mu)}\left[\frac{16\Phi_k}{15\mu^2} + \left( \frac{L}{\mu}+1\right) \mathbb{E}||\boldsymbol{w}_{k-1} - \boldsymbol{w}^*||^2 \right]
 \\& \leq \frac{L}{2(k+L/\mu)}\left[\frac{16\Phi_k}{15\mu^2} + \left( \frac{2L}{\mu^2}+\frac{2}{\mu}\right) (F_{n}(\bar{\boldsymbol{w}}_{k-1}) -F^*) \right]
\\& = \frac{L(1+L/\mu)}{\mu}\frac{1}{k+L/\mu}(F_{n}(\bar{\boldsymbol{w}}_{k-1}) -F^*) 
\\&+ \frac{16L}{30\mu^2(k+L/\mu)} \left[ 4JB^2 \left(\frac{\eta_k+1}{\eta_k^2} \right)+ \frac{\sigma_r^2}{N^2}  \right].
\end{aligned} 
\end{equation}
Finally, by applying \eqref{equa:final_convergenceIID} recursively over $K$ global rounds, the convergence bound of the federated model training at each modality cluster of $N$ UAVs after $K$ global communication rounds can be given as 
\begin{equation} \footnotesize \label{equa:final_convergenceFunction}
\begin{aligned}
&\mathbb{E}\left[F_n(\boldsymbol{w}_K)\right] -F^* \leq \frac{L(1+L/\mu)}{\mu}\frac{1}{(K+L/\mu)} (F_n(\boldsymbol{w}_1) -F^*)
 \\&+\frac{16L}{30\mu^2(K+L/\mu)}\sum_{k=1}^{K} \left[4JB^2 \left(\frac{\eta_k+1}{\eta_k^2} \right) +\frac{\sigma_r^2}{N^2} \right],
\end{aligned}
\end{equation}
which completes the proof.


\begin{IEEEbiography}[{\includegraphics[width=1in,height=1.25in,clip,keepaspectratio]{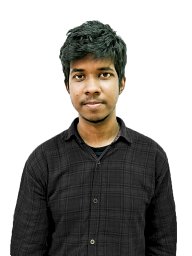}}]{Ratun Rahman}
is a Ph.D. candidate in Electrical and Computer Engineering at The University of Alabama in Huntsville, conducting research in the Networking, Intelligence, and Security Lab. His work focuses on theoretical machine learning, federated learning, and quantum learning. He has published in leading IEEE journals and conferences, aiming to find the optimal machine learning solution to existing problems. Currently, he is working on quantum learning and distributed quantum learning. His research interest focuses on machine learning, federated learning, and quantum learning.
\end{IEEEbiography}

\begin{IEEEbiography}[{\includegraphics[width=1in,height=1.25in,clip,keepaspectratio]{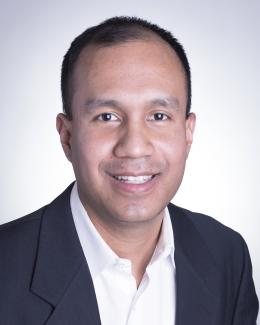}}]{Pablo Moriano}
(Senior Member, IEEE) received B.S. and M.S. degrees in electrical engineering from Pontificia Universidad Javeriana in Colombia and M.S. and Ph.D. degrees in informatics from Indiana University Bloomington, Bloomington, IN, USA. He is a research scientist with the Computer Science and Mathematics Division at Oak Ridge National Laboratory, Oak Ridge, TN, USA. His research lies at the intersection of data science,
network science, and cybersecurity. In particular, he uses data-driven and computational methods to discover, understand, and detect anomalous behavior in large-scale networked systems. Applications of his research range across multiple disciplines, including, the detection of exceptional events in social media, Internet route hijacking, insider threat behavior in version control systems, and anomaly detection in cyber-physical systems. Dr. Moriano is a member of ACM and SIAM.
\end{IEEEbiography}

\begin{IEEEbiography}[{\includegraphics[width=1in,height=1.25in,clip,keepaspectratio]{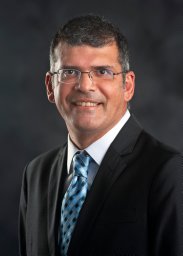}}]{Samee U. Khan}
(Senior Member, IEEE) received a doctorate in 2007 from the University of Texas Arlington. He is professor and head of the Mike Wiegers Department of Electrical and Computer Engineering at Kansas State University. Before joining K-State, he was a faculty member at Mississippi State University (MSU), serving as department head and the James W. Bagley chair from 2020 to 2024. He started his career at North Dakota State University (NDSU) in 2008 and rose through the ranks to become the Walter B. Booth professor. While at NDSU, he was assigned to the National Science Foundation (2016-2020) as cluster lead for computer systems research within the computer and network systems division, where he managed a portfolio of more than: 500 active projects, 700 distinct investigators and \$160 million.
\end{IEEEbiography}

\begin{IEEEbiography}[{\includegraphics[width=1in,height=1.25in,clip,keepaspectratio]{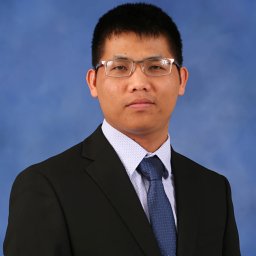}}]{Dinh C. Nguyen}
(Member, IEEE) is an assistant professor at the Department of Electrical and Computer Engineering, The University of Alabama in Huntsville, USA. He worked as a postdoctoral research associate at Purdue University, USA from 2022 to 2023. He obtained the Ph.D. degree in computer science from Deakin University, Australia in 2021. His current research interests include federated machine learning, Internet of Things, wireless networking, and security. He has published over 50 papers (including over 25 first-authored papers) on top-tier IEEE/ACM conferences and journals such as IEEE Journal on Selected Areas in Communications, IEEE Communications Surveys and Tutorials, IEEE Transactions on Mobile Computing, and IEEE Wireless Communications Magazine. He is an Associate Editor of the IEEE Internet of Things Journal, IEEE Open Journal of the Communications Society, and a Lead Guest Editor of IEEE Internet of Things Magazine on the special issue of federated learning for Industrial Internet of Things.  He received the Best Editor Award from IEEE Open Journal of Communications Society in 2023.
\end{IEEEbiography}

\end{document}